\algrenewcommand{\textproc}[1]{#1}
\newcommand{\eat}[1]{}
\newtheorem{theorem}{Theorem}[section]
\newtheorem{lemma}[theorem]{Lemma}
\newtheorem{definition}[theorem]{Definition}
\renewcommand{\varepsilon}{\epsilon}
\newcounter{this-list}
\newcommand{\nicecode}[1]{$\mathtt{#1}$}
\title{Limitations on Accurate, Trusted, Human-level Reasoning}
\author{%
   Rina Panigrahy \\
   Google Research \\
  \texttt{rinap@google.com}
  \and
   Vatsal Sharan \\
  % USC
   University of Southern California
   \\
  \texttt{vsharan@usc.edu}
}
\begin{document}
\date{}

\maketitle

\begin{abstract}
We identify a fundamental incompatibility between the goals of accuracy, trust, and human-level reasoning in artificial intelligence (AI) systems, for strict mathematical definitions of these notions. 
% Safety, trust and Artificial General Intelligence (AGI) are aspirational goals in artificial intelligence (AI) systems, and there are several informal interpretations  of these notions. 
% In this paper, we propose strict, mathematical definitions of safety, trust, and AGI, and demonstrate a fundamental incompatibility between them. 
We define accuracy of a system as the property that it never makes any false claims when it has the ability to abstain from making a prediction on any input, and  trust as the assumption that the system is accurate. We define human-level reasoning as the property of an AI system always matching or exceeding human capability. Our core finding is that---for our formal definitions of these notions---an accurate and trusted AI system cannot be a human-level reasoning system: for such an accurate, trusted system there are task instances which are easily and provably solvable by a human but not by the system. 
% {We note that we consider strict mathematical definitions of accuracy and trust, and it is possible for real-world deployments to instead rely on alternate, practical interpretations of these notions}. We show our results for program verification, planning, and graph reachability.   
Our proofs draw parallels to Gödel's incompleteness theorems and Turing's proof of the undecidability of the halting problem, and can be regarded as interpretations of Gödel's and Turing's results. Key to our proof is the formalization of the notion of trust, which allows us to separate the intrinsic property of a system (being accurate) from its epistemic status (being trusted).
\end{abstract}

\section{Introduction}

Rapid advancements in artificial intelligence  have intensified focus on achieving human-level reasoning across diverse tasks \citep{morris2024position,feng2024far}. AI  systems capable of human-level reasoning have the potential for vast societal benefits through transformative impacts on nearly every aspect of society, including healthcare \citep{singhal2025toward}, scientific research \citep{wang2023scientific}, education \citep{wang2024large}, sustainability \citep{rolnick2022tackling}, and economic growth \citep{chui2023economic}. At the same time, development of such powerful systems necessitates a foundational emphasis on accuracy, reliability and trustworthiness. Consequently, there has been significant interest in ensuring accuracy, reliability and trust for AI systems \citep{bostrom2014superintelligence,amodei2016concrete,russell2019human,jacovi2021formalizing,tegmark2023provably}.

%While the potential societal benefits of AGI are vast, encompassing transformative impacts on various industries and global dynamics \citep{chui2023economic}, its development necessitates a foundational emphasis on safety and trustworthiness. Consequently, there has been significant interest in ensuring safety and trust for AI systems \citep{tegmark2023provably}.  

In this work, we point out a fundamental tension between the requirements of an AI system being accurate and trusted, but also matching or exceeding human reasoning capabilities, i.e. being a human-level reasoning system. There are several interpretations of accuracy, trust and human-level reasoning and our result does not preclude achieving these desiderata simultaneously under more relaxed interpretations that could still be  useful in many practical applications. Therefore, to understand the limitations pointed out by our result, it is important to first understand our formalizations of these notions, and we will immediately proceed with defining  these notions. We start by first defining an AI system for a given task.

%We first formally define the notions of safety and trust for an AI system. Before defining safety and trust, we need to define an AI system.

\begin{definition}[AI system]\label{def:ai_system}
    We define an AI system as a system which takes an instance of a task, and either solves the instance or abstains from giving an answer for the instance (for instance by outputting `don't know'). We allow the AI system to be randomized, for example it could abstain with some probability (with respect to its internal randomness) on an instance, and output a solution otherwise.
\end{definition}

Definition \ref{def:ai_system} simply formalizes the notion of a system which solves instances of a task.  In this paper, we will consider the tasks of program verification, planning and determining graph reachability (defined rigorously later). Note that we allow the system to abstain from providing an answer for some instance if it so determines, which could be important from the perspective of reliability and safety \citep{geifman2017selective}. Also note that the definition does not require the AI system to necessarily provide a proof of its answer, the system only needs to provide an answer or abstain. Next, we define the notion of accuracy.

\begin{definition}[Accuracy]\label{def:accuracy}
    We define a system to be accurate if it does not make any false claims, i.e., for every instance it either answers the instance correctly or abstains from answering it. For a randomized system, if the system answers the instance with some non-zero probability instead of always abstaining, then its answer must be correct.
\end{definition}

As an example, in the context of verifying that a program  has some specified property (such as always terminating), the system is accurate if it does not classify a program as having the desired property if it does not have that particular property. Our definition allows the system to abstain from answering an instance if it is uncertain, but it requires the system to be correct whenever it outputs an answer. Note that our notion of accuracy is closely related to notions of factuality, reliability and, in some situations, even safety. Often abstention is preferred over a confident but incorrect response as the consequences of an incorrect response may be severe. Even small probabilities of error may not be tolerable for mission-critical tasks, especially as system capabilities grow \citep{amodei2016concrete,tegmark2023provably}.  

Next, we define trust. Trust is defined with respect to some set of individuals $H$, and is  the assumption of accuracy.

% We note that inaccuracy of a system could lead to the system being unsafe, such as in mission-critical applications. In fact, we also consider a similar notion to accuracy which captures the concept of safety more explicitly.

\begin{definition}[Trust]\label{def:trust}
    % We define trust  in the context of some desired property with respect to some entity $H$ to be the assumption by $H$ that the desired property is satisfied. In particular, trust in the context of accuracy (Def. \ref{def:accuracy}) is the assumption by $H$ that the system is accurate. 
    % In the context of safety (Def. \ref{def:safety}), trust is the assumption that the system is safe.
 We define a system to be trusted by $H$ if for every $h\in H$, $h$ has the  assumption that the system is accurate. 
\end{definition}

%\vsedit{To elaborate on the definition, if a system is trusted then assuming that the system is safe is a  scientifically acceptable axiom, i.e. it is valid to make claims which assume that the system is safe.} 
Therefore, if a system is trusted by $H$ then it is assumed by any individual in $H$ that the system is accurate. For brevity, we will sometimes refer to a system as being ``trusted" without explicitly referencing the set $H$, if there exists some set of individuals $H$ such that the system is trusted by $H$. As a remark, we note that our results are agnostic to whether trust in the system stems from theoretical proofs, empirical verification, or some combination of these, we only require that when using the system there is an assumption that it is accurate. We also note that accuracy does not necessarily imply trust, or vice versa. Accuracy is an underlying property of the system being consistent and not making false claims. It is possible that some analysis of the system cannot identify this property or is incorrect, leading to a lack of trust or mistaken trust. For example, a system could actually be accurate but not trusted because existing empirical or theoretical tools are insufficient to establish accuracy. Similarly, a system could actually be inaccurate but still trusted by users, such as when the trust rests on empirical evidence which is incomplete, or on incorrect theoretical assumptions.

% We also note that Gödel's incompleteness theorems say that it is not possible to provide the consistency of a mathematical reasoning system within the system itself, and trust captures external beliefs 

%\vscomment{Add why we need trust, Godel's results tells us that safety is not possible to prove within the system.} \vscomment{Add why safety and trust are generally independent, a system may be safe but not trusted, vice versa a system may be trusted but not actually safe.} 

Finally, we need to formally define a human-level reasoning system in order to mathematically investigate its limitations. 

%formalize a notion which draws on the common that AGI system should be at least as capable as a human on that task.

%We now formally define an AGI system in order to be able to state our results. 

\begin{definition}[Human-level reasoning]\label{def:agi}
    We define a system to be a human-level reasoning system if for every task instance  such that a human has a provably correct solution for that instance, the system can also solve the instance with some non-zero probability. Similarly, the system is not a human-level reasoning system if there exists some task instance which can be easily and provably solved by a human, but the system can never solve the instance (for probabilistic systems, the probability of the system solving the instance is 0).
\end{definition}

Our definition draws on the common view that a human-level reasoning system for a task such as program verification should be at least as capable as a human on that task. In particular,  if there are explicit task instances which can be provably solved by humans (for example, explicit programs which the humans can easily and provably certify as having the desired property) but cannot be solved by the system, then the system is not a human-level reasoning system as per our definition. 

% This is a challenge, since it is well-accepted that there is no well-accepted definition of AGI---or even of intelligence itself \citep{legg2007universal,legg2007collection}. Nevertheless, we propose a formal definition, and argue why it captures important aspects of the goal of AGI.

Our definition draws on some similar notions of artificial general intelligence (AGI). It 
is well-accepted that there is no well-accepted definition of AGI---or even of intelligence itself \citep{legg2007universal,legg2007collection}---but the term AGI is usually understood to mean that the AI system should be at least as capable as humans across diverse tasks. The term \emph{superintelligence}  is also used in the context of AGI \citep{bostrom2014superintelligence,morris2024position}. For example \citet{bostrom2014superintelligence} defines superintelligence as \emph{``any intellect that greatly exceeds the cognitive performance of humans in virtually all domains of interest"}, and \citet{morris2024position} defines Level 5 AGI, which they term artificial superintelligence, as \emph{``outperforming 100\% of humans"} on a \emph{``wide range of non-physical tasks"}. One distinction between these notions of superintelligence and our definition of human-level reasoning is that our definition does not require the AI system to necessarily \emph{outperform}  humans, but it does require the system to do at least as well as humans on all task instances.

% Our definition draws on the common view that an AGI system for a task such as program verification should be at least as capable as a human on that task. In particular,  if there are explicit task instances which can be provably solved by humans (for example, explicit programs which the humans can easily and provably certify as having the desired property) but cannot be solved by the system, then the system is not an AGI system as per our definition. 
% We note that our definition bears some similarity to notions of \emph{superintelligence} \citep{bostrom2014superintelligence,morris2024position}, and the reader can regard Definition~\ref{def:agi} as a definition of superintelligence if they so prefer. For example \citet{bostrom2014superintelligence} defines superintelligence as \emph{``any intellect that greatly exceeds the cognitive performance of humans in virtually all domains of interest"}, and \citet{morris2024position} defines Level 5 AGI, which they term artificial superintelligence, as \emph{``outperforming 100\% of humans"} on a \emph{``wide range of non-physical tasks"}. One distinction between these notions of superintelligence and our definition of AGI is that our definition does not require the AI system to necessarily \emph{outperform}  humans, but it does require the system to do at least as well as humans on all task instances.

In our definition, when we say that a human has a provably correct solution, we mean that the human can provide a proof. In this paper whenever we make claims about humans being able to solve problems, we provide such proofs. To probe this point and Definition~\ref{def:agi} further, we consider an analogy to chess --- a domain for which we have had advanced AI systems for quite some time. Consider a future proposed human-level reasoning system, which is proficient at chess among other things. If there were explicit chess positions which most human chess players can solve provably without too much difficulty, but the proposed human-level reasoning system struggled on those positions, then the proposed system does not capture some aspects of human cognition, and hence is arguably not actually a human-level reasoning system.\footnote{We note that many current advanced chess engines still struggle to evaluate certain positions which are relatively easy for human experts \citep{doggers2017consciousness,zahavy2023diversifying}. However, this is likely a result of the these engines being `narrow' in terms of their approach and reasoning, and we believe that a proposed human-level reasoning or superintelligent system which is purported to excel on chess should have the ability to solve such instances.} Similarly, in our paper we will demonstrate explicit instances of certain tasks for which we provide solutions with short proofs which are also rather simple, but these instances cannot be solved by AI systems having certain properties.

%\vscomment{whenever we say provable, the human can provide a scientifically acceptable proof. in this paper whenever we make claims about humans being able to solve problems, we provide such proofs.}

    We now state our main result, that it is not possible for a human-level reasoning system to be both accurate and trusted, as per our definitions of accuracy, trust and human-level reasoning. In other words, the notions of accuracy, trust and human-level reasoning are mutually incompatible --- any system can have at most two of these three properties.

\begin{theorem}\label{thm:agi}
    If an AI system is accurate and trusted by some set of individuals $H$, then it cannot be a human-level reasoning system. In particular, it is not a human-level reasoning system for the tasks of program verification, planning and determining graph reachability.
\end{theorem}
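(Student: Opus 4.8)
The plan is to use a diagonalization argument in the spirit of Turing's proof of the undecidability of the halting problem, adapted to the safety/trust setup. The key observation is that safety plus trust together give us something powerful: if the system is trusted, then we (and the construction) are \emph{entitled to assume} that whenever the system outputs an answer, that answer is correct. This lets us treat the system as a sound (but possibly incomplete) decision procedure whose behavior we can simulate and reason about. The diagonalization then constructs a specific task instance on which the system is forced to abstain, yet for which a human can give a short provably correct solution---establishing that the system is not an AGI system per Definition~\ref{def:agi}.

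I would carry this out concretely for program verification first, since the halting-problem connection is most direct there. Let $S$ denote the safe, trusted AI system, viewed as a (possibly randomized) procedure that on an input program either outputs a verification claim or abstains. The heart of the argument is to build, via a standard recursion-theoretic fixed-point/self-reference step, a program $P^\star$ that queries (a description of) the system $S$ on itself and then acts so as to contradict any non-abstaining answer: if $S$ were to certify that $P^\star$ has the property in question, then $P^\star$ is constructed to violate that property, and conversely. Safety forbids $S$ from ever giving the wrong answer, so the only consistent behavior is that $S$ abstains on $P^\star$ (with probability $1$, to rule out the randomized case). The crucial point is that a human, armed with the assumption of trust (i.e.\ knowing $S$ is safe), can then reason \emph{about} this construction: precisely because $S$ is safe and therefore must abstain on $P^\star$, the human deduces the actual property of $P^\star$ and writes down a short, scientifically acceptable proof. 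Thus the human provably solves the very instance on which $S$ must remain silent.

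After establishing the program-verification case, I would transfer the result to planning and to graph reachability by reductions: encode the self-referential halting-style instance as a planning instance (reachability of a goal state in a transition system that simulates $P^\star$) or as a reachability instance in an appropriately constructed (possibly infinite or succinctly described) graph, so that a non-abstaining answer from $S$ again yields a contradiction while a human retains a short proof. These reductions should preserve both the safety-forces-abstention property and the human's ability to certify the answer.

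The main obstacle I expect is making the self-reference and the use of the trust assumption fully rigorous and non-circular. Specifically, the human's proof relies on the premise ``$S$ is safe,'' which is exactly the trust assumption; one must be careful that this is a legitimate hypothesis available to the human rather than a hidden assumption that $S$ could also exploit. The asymmetry that saves the argument is that the human is allowed to reason \emph{in a metatheory that includes the trust assumption as an axiom}, whereas $S$, being safe, cannot output a claim that would be false under that same assumption---this is precisely the Gödelian gap between provability-within-the-system and truth-as-seen-from-outside. Pinning down the model of computation for $S$ (including how its own description is made available for the fixed-point construction) and handling the randomized case so that ``probability $0$'' genuinely holds are the remaining technical points that need care.
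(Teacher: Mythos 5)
Your proposal follows essentially the same route as the paper: a self-referential diagonal program that contradicts any non-abstaining answer from the system, the observation that safety forces abstention while trust hands the human exactly the premise needed for a short correct proof, and reductions from halting-style instances to planning and graph reachability. The only substantive detail you flag but do not resolve---handling randomized systems so that abstention holds with probability $1$---is addressed in the paper by passing an explicit execution trace $T$ of the system as an input to the diagonal program (and, for the verification case, deriving the contradiction from the program returning the string ``Not '' concatenated with its own output rather than from non-termination), which is the mechanism you would need to fill in.
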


Theorem \ref{thm:agi} points out a fundamental limitation of a human-level reasoning system: such a system cannot be both accurate and trusted. Similarly, if there is some trusted AI system, then either that system is not actually accurate, or it is not a human-level reasoning system. We prove this result in Section \ref{sec:technical}. While much of our proof technique mimics Gödel's proof of his incompleteness theorems \citep{godel1931formal} (and also Turing's proof of the undecidability of the halting problem \citep{turing1937computable}), the argument we make is not in the context of axiomatic system and theorem proving but in the context of an AI system that needs to solve certain task instances of applications such as program verification or planning.  Our proofs are self-contained in this context and do not require knowledge of formal axiomatic reasoning or logical rules of deduction. Thus rather than viewing the results as limitations of systems of logic, they should be viewed as limitations of AI systems.

{We also consider a relaxation of accuracy  which requires the AI system to be calibrated with respect to its predictions, as opposed to Definition \ref{def:accuracy} which requires the system to be always correct unless it abstains. In the context of program verification, calibration requires that if the AI system outputs that some program terminates with probability $p$, then that program should actually terminate with probability approximately $p$. In Section \ref{sec:calibration} we show a similar limitation as in Theorem~\ref{thm:agi}  for AI systems which are calibrated.}

%\vscomment{Add in intro: While much of our technique mimics Godel's proof, the argument we make is not in the context of axiomatic system and theorem proving but in the context of an AI system that needs to solve certain real-world tasks. Our proofs are self-contained in this context and does not require knowledge of formal axiomatic reasoning or logical rules of deduction. Thus rather than viewing the results as limitations of systems of logic, it should be viewed as limitations of AI systems.}

\section{Related Work}\label{sec:related} 

%\vscomment{Add: Lucas-Penrose debate and how we avoid it using safety}

%In this section, we mainly focus on the related literature on the limitations of AI in the context of Gödel's results. We refer the interested reader to relevant surveys for more discussion of related work on the importance of safety and trust in AI systems, and for more discussion of AGI.

In this section, we discuss some more related work on human-level reasoning, accuracy and trust in AI, and limitations of AI in the context of Gödel's results.

\paragraph{Human-level Reasoning Systems.} Though not termed as ``Artificial General Intelligence (AGI)" until more recently \citep{goertzel2007agi}, the concept of machines which match or surpass the cognitive capabilities of humans dates back to the earliest days of AI \citep{turing1950computing,mccarthy1955proposal,minsky1961steps}. Due to recent advances in foundation models such as large language models \citep{bommasani2021opportunities}, there has been significant interest and capital investments in developing systems capable of human-level reasoning both from the private sector and from governments \citep{maslej2025artificial}.

\paragraph{Accuracy, Reliability and Trust in AI.} Concerns around risks associated with advanced AI systems similarly date back to early days of AI \citep{turing1951heretical,wiener1950human}. With growing system capabilities, there has been significant recent focus on ensuring safety and trust in the context of AI systems \citep{fli2024safetyindex,aistatement2025}.  We refer the interested reader to several recent surveys and roadmaps for ensuring safety and trust in highly-capable, general purpose AI systems \citep{bengio2024managing,chua2024ai,chen2024trustworthy,bengio2025international}.  It is also important to recognize that AI safety and trust encompass many facets beyond those considered in our definitions. For example, even formally specifying  safety objectives can be challenging for complex tasks \citep{amodei2016concrete}, which introduces additional challenges to develop safe AI systems beyond those pointed out in our work.

\paragraph{Gödel and Turing's results.} 
Fundamental limits on theorem proving and program verification   were famously established by Gödel’s incompleteness theorems and Turing’s undecidability results. Gödel showed that in any sufficiently expressive formal system, there exist true statements (also called Gödel statements) that cannot be formally proven within the system \citep{godel1931formal}. Building on this, Turing proved that the Halting Problem—determining whether an arbitrary program halts on a given input—is undecidable \citep{turing1937computable}, meaning no  algorithm can solve it for all possible programs. These results imply that fully automatic verification of arbitrary program behavior, such as ensuring termination, is provably impossible in the general case. Our result uses similar ideas to draw a separation between the abilities of an accurate, trusted AI system and humans.

\paragraph{Penrose-Lucas argument, and implications of Gödel's results for AI.} Several arguments have been made for why Gödel's result imply that AI can never match humans, the most famous of which are perhaps due to Penrose \citep{penrose1989emperor} and Lucas \citep{lucas1961minds}. To summarize very briefly, Penrose and Lucas have argued that incompleteness does not apply to humans since they can see the truth of Gödel statements, and therefore humans can have mathematical insights that Turing machines cannot \citep{penrose_lucas_wiki}. This argument is quite contested, and several objections have been raised against it \citep{chalmers1995minds,laforte1998godel,kerber2005lucas} --- again going back to Turing \citep{turing1950computing} ---  with a core objection being that humans also cannot be certain that their own reasoning process is sound. 

{The goal of our work is distinct from that of Penrose and Lucas, and we do not aim to show a separation between \emph{any} AI system and human reasoning. }
%In our work, the goal is not to show a separation between any AI system and human reasoning, or that human minds cannot be simulated by machines. 
 Instead, we prove a more restricted but rigorous result: that \emph{accurate, trusted} AI systems (under formal definitions of those terms) are necessarily unable to solve certain problems that humans can solve with provable correctness. The assumption of accuracy and trust is crucial (as will be evident from our proofs) — it allows humans to conclude the correctness of some outputs even when the AI system, by its own constraints, must abstain.

%In our work, the goal is not to show a separation between any AI system and human reasoning, and that human reasoning cannot be captured by a Turing machine. Instead, we aim to show that for a \emph{safe, trusted} AI system, there are problems which the system cannot solve but which can be solved by humans. The assumption of safety and trust is crucial here, as can be seen in our proofs, it allows relatively easy solution of problems which cannot be solved by the  AI system.

We also note that there are some other limitations of AI which have been pointed out by using Gödel and Turing's results, such as  the impossibility of ``containing" superintelligence \citep{alfonseca2021superintelligence}, and the necessity of hallucinations in a certain formal model \citep{xu2024hallucination}, see the survey \citet{brcic2023impossibility} for other results similar to these. 

%\paragraph{Connections to philosophy of mind and cognitive science.} Our results relate to longstanding debates in the philosophy of mind about whether human cognition can be fully captured by formal systems. While our goal is not to argue for or against the computability of human reasoning, our finding—that safe and trusted systems are provably unable to solve certain problems that humans can solve with short, correct proofs—intersects with classical arguments about the limits of mechanistic reasoning. Notably, thinkers such as Putnam and Fodor have defended computational theories of mind \citep{putnam1960minds,fodor1975language}, while others such as Searle have argued against the sufficiency of formal symbol manipulation for genuine understanding \citep{searle1980minds}. In cognitive science, models of bounded rationality \citep{simon1957models} and heuristic decision-making \citep{gigerenzer1999simple} emphasize that humans often solve problems effectively under resource constraints, without possessing formal soundness or completeness. Our work complements these perspectives by demonstrating that systems constrained by formal safety cannot replicate some forms of human solvability, even if such systems are otherwise general and trustworthy.

\section{Technical Results}\label{sec:technical}

In this section, we discuss our main technical results regarding limitations of accurate, trusted, human-level reasoning for program verification, planning, and graph reachability.

\subsection{Program verification}

The first task we consider is program verification, more specifically the task of determining if a given program always halts. Program verification (also formal verification) is a foundational problem in computer science and software engineering, with critical implications for ensuring the reliability, safety, and correctness of software systems \citep{hoare1969axiomatic,clarke2018handbook}

% Requires:
% \usepackage{tikz}
% \usetikzlibrary{positioning,fit,calc}
% \usepackage{algorithmicx}
% \usepackage{algpseudocode}
% \usepackage{enumitem}

% ------------ knobs ------------
\newlength{\topW}      \setlength{\topW}{\linewidth} % total width of the TOP row
\newlength{\colgap}    \setlength{\colgap}{5mm}          % gap between claim & code
\newlength{\claimW}    \setlength{\claimW}{.5\topW}     % width of LEFT (claim) box
\newlength{\claimdrop} \setlength{\claimdrop}{10mm}       % ↓ shift of the claim box
\newlength{\rowgap}    \setlength{\rowgap}{5mm}          % gap between TOP row and proof box

% paddings (tighten/loosen boxes)
\newlength{\claimpadx}\setlength{\claimpadx}{10pt}
\newlength{\claimpady}\setlength{\claimpady}{8pt}
\newlength{\codepadx} \setlength{\codepadx}{0pt}
\newlength{\codepady} \setlength{\codepady}{1pt}

% GLOBAL font size for this figure:
\newcommand{\tikzfigsize}{\small}      % <- change to \normalsize, \footnotesize, ...

% derived
\newlength{\codeW}     \setlength{\codeW}{\dimexpr \topW-\claimW-\colgap\relax}

\begin{figure}[t]
\centering
\begin{tikzpicture}[every node/.style={rounded corners=2pt, font=\tikzfigsize}]
  \definecolor{boxyellow}{RGB}{255,237,170}
  \definecolor{boxgreen}{RGB}{207,244,187}

  % top row reference (left edge = 0 → figure centers correctly)
  \coordinate (topLeft) at (0,0);

% --- RIGHT: Green algorithm box (tight + symmetric padding) ---
\node[draw=black, fill=boxgreen, align=left,
      inner xsep=\codepadx, inner ysep=\codepady,
      text width=\dimexpr\codeW-2\codepadx\relax,
      anchor=north west] (code)
  at ($ (topLeft) + (\claimW+\colgap,0) $) {%
    \begingroup\tikzfigsize
    % kill list glue so top/bottom padding comes ONLY from inner ysep
    \setlength{\parskip}{0pt}
    \setlength{\topsep}{0pt}
    \setlength{\partopsep}{0pt}
    % optional: keep algorithm indent modest
    \algrenewcommand\algorithmicindent{1.2em}
    \begin{algorithmic}[0]
      \Procedure{\nicecode{G\ddot{o}del\_program}}{}       \If{$A(\mathtt{G\ddot{o}del\_program}) == \text{`well-behaved'}$}
          \While{true}
          \EndWhile
        \Else
          \State \Return 0
        \EndIf
      \EndProcedure
      \\
    \end{algorithmic}
    \endgroup
  };

  % --- LEFT: Claim box (dropped by \claimdrop) ---
  \node[draw=black, fill=boxyellow, align=left,
        inner xsep=\claimpadx, inner ysep=\claimpady,
        text width=\dimexpr\claimW-2\claimpadx\relax,
        anchor=north west] (claim)
    at ($ (topLeft) + (0,-\claimdrop) $) {
    \textbf{Claim (informal)}: If $A$ is accurate then \nicecode{G\ddot{o}del\_program} is
    well-behaved, but $A$ cannot output that \nicecode{G\ddot{o}del\_program} is
    well-behaved.
  };

  % union of the TOP row (for vertical placement only)
  \node[fit=(claim)(code), inner sep=0pt, draw=none] (toprow) {};

% add these knobs near the others
\newlength{\proofpadx}\setlength{\proofpadx}{12pt}
\newlength{\proofpady}\setlength{\proofpady}{10pt}

% ... then replace the proof node:
\node[draw=black, fill=boxyellow, align=left,
      inner xsep=\proofpadx, inner ysep=\proofpady,
      text width=\dimexpr\topW-2\proofpadx\relax,  % <-- key change
      anchor=north west] (proof)
  at ($ (toprow.south west) + (0,-\rowgap) $) {
  \textbf{Proof sketch:}\\[-2pt]
  \begin{itemize}[leftmargin=1.3em, itemsep=2pt, topsep=2pt, parsep=0pt]
    \item If $A$ outputs that \nicecode{G\ddot{o}del\_program} is well-behaved, then the program enters an infinite loop.
    \item If $A$ is accurate, this is a contradiction, hence $A$ cannot output \nicecode{G\ddot{o}del\_program} is well-behaved.
    \item If $A$ does not output \nicecode{G\ddot{o}del\_program} is well-behaved, then the program immediately terminates and hence is well-behaved.
  \end{itemize}
};
\end{tikzpicture}
\caption{Sketch of the basic argument for program verification, for the case when the AI system $A$ is well-behaved (i.e., always terminates) and deterministic. If $A$ is accurate, it cannot determine if \nicecode{G\ddot{o}del\_program} is well-behaved. Now if $A$ is trusted by the set $H$ then individuals in $H$ assume that $A$ is accurate, and hence the condition `If $A$ is accurate' is satisfied. Therefore if $A$ is accurate and trusted by $H$, then individuals in $H$ can prove that \nicecode{G\ddot{o}del\_program} is well-behaved by the proof sketched above. Therefore, an accurate, trusted system cannot solve this instance, even though it is provably solvable by individuals in $H$.}
    \label{fig:godel}
\end{figure}

\iffalse
\begin{figure}
    \centering
    \includegraphics[width=0.7\linewidth]{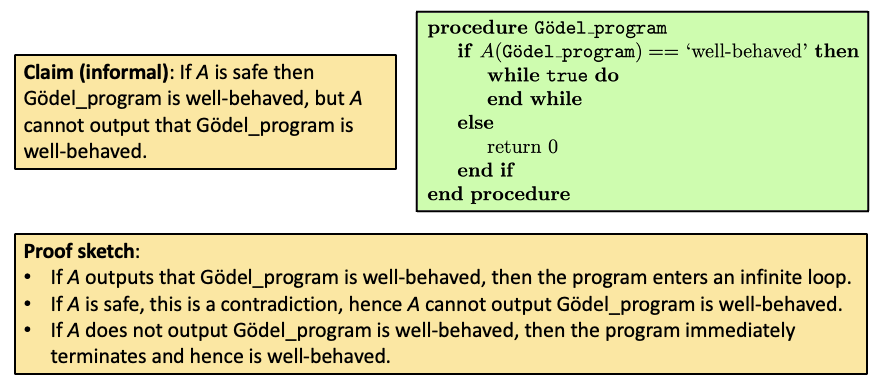}
    \caption{Sketch of the argument for program verification, for the case when the AI system $A$ is well-behaved (always terminates) and deterministic. If $A$ is accurate, it cannot determine if \nicecode{G\ddot{o}del\_program} is well-behaved. However, since the condition of $A$ being accurate is satisfied for a trusted system, it is possible to prove that \nicecode{G\ddot{o}del\_program} is well-behaved for a trusted system. Therefore, a accurate, trusted system cannot solve this instance, even though it is provably solvable.}
    \label{fig:godel}
\end{figure}
\fi

\begin{definition}[Program verification]\label{def:prog_ver}
    We define a program to be \emph{well-behaved} if it terminates on every input (for randomized programs, the program terminates with probability 1). In the program verification problem, the system is given a program instance and it classifies the instance as being `well-behaved', `not well-behaved' or abstains from making a prediction (outputs `don't know'). Accuracy for program verification requires that the system never outputs that a well-behaved program is not well-behaved, and vice versa. The system is trusted if we assume that the system is accurate. Note that the system is not a human-level reasoning system if there is a well-behaved program which can be easily proven to be well-behaved by a human, but for which the program always abstains from making a prediction.
\end{definition}

% Our definition of program verification (Definition \ref{def:prog_ver}) and our results are for the property of the  program halting. 
% We believe it is possible to extend the result for verifying other semantic properties of programs --- analogous to Rice's theorem \citep{rice1953classes}. 
Our definition of program verification (Definition \ref{def:prog_ver}) and our results are for the property of the  program halting. In Section \ref{sec:safety}, we extend the result to a general class of properties of programs, including those relevant from the perspective of safety. We now state our result for program verification.

\begin{theorem}\label{thm:godel1}
    If a system is accurate and trusted, then it cannot be a human-level reasoning system for program verification.
\end{theorem}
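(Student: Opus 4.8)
The plan is to adapt the self-referential diagonalization underlying Gödel's and Turing's arguments to the language of the AI system $A$. First I would construct, via Kleene's recursion theorem (equivalently, a quine-style self-reference), the program \nicecode{G\ddot{o}del\_program} depicted in Figure~\ref{fig:godel}: on execution it runs $A$ on its own source code, enters an infinite loop if $A$ returns `well-behaved', and otherwise halts by returning $0$. The purpose of this construction is to make the termination behavior of \nicecode{G\ddot{o}del\_program} the logical negation of $A$'s verdict on it, so that every non-abstaining answer $A$ could give is self-defeating.

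Next I would run the core safety argument in the basic case where $A$ is deterministic and well-behaved (the case drawn in the figure). If $A$ outputs `well-behaved' on \nicecode{G\ddot{o}del\_program}, the program loops forever and so is \emph{not} well-behaved, making $A$'s claim false and contradicting safety; if $A$ outputs `not well-behaved', the program takes the \texttt{else} branch, returns $0$, and hence \emph{is} well-behaved, again contradicting safety. So a safe $A$ must abstain. But once we know $A$ abstains, the \texttt{else} branch executes and \nicecode{G\ddot{o}del\_program} provably halts, i.e. it is well-behaved. The role of \emph{trust} is essential precisely here: because a human is entitled to \emph{assume} $A$ is safe, the human can carry out exactly this deduction and thereby obtain a short, scientifically acceptable proof that \nicecode{G\ddot{o}del\_program} is well-behaved, while $A$ itself provably abstains on it. This exhibits a human-provable instance that $A$ solves with probability $0$, which by Definition~\ref{def:agi} shows $A$ is not an AGI system for program verification.

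I would then remove the deterministic restriction. For a randomized $A$ that produces an answer on this instance, let $p$, $q$, $r$ be the probabilities that $A$ outputs `well-behaved', `not well-behaved', and `don't know' on \nicecode{G\ddot{o}del\_program}. Since the program fails to terminate whenever $A$ returns `well-behaved', it halts with probability at most $1-p$; if $p>0$ it is therefore not well-behaved, so the positive-probability event that $A$ outputs `well-behaved' is a false claim and safety forces $p=0$. Given $p=0$ the program halts with probability $1$ and is well-behaved, so by the same reasoning safety forces $q=0$, leaving $r=1$: the system abstains almost surely and solves the instance with probability $0$, reproducing the non-AGI conclusion of Definition~\ref{def:agi}, with the human's trust-based derivation unchanged.

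I expect the main obstacle to be the case where $A$ is itself \emph{not} well-behaved. Then the self-referential call to $A$ inside \nicecode{G\ddot{o}del\_program} may never return, the program inherits this non-termination, and it becomes genuinely not well-behaved; the human can no longer certify it through this construction, and $A$ may even \emph{safely} label it `not well-behaved'. This is the analogue of the way the classical arguments implicitly depend on the soundness, or halting, of the underlying machine. I would address it by treating a run on which $A$ fails to halt as a run on which $A$ does not solve the instance---consistent with Definition~\ref{def:ai_system}, where the system either solves or abstains---and by reducing the general statement to the well-behaved $A$ of Figure~\ref{fig:godel}, invoking the trust-based proof of well-behavedness exactly when $A$ halts on \nicecode{G\ddot{o}del\_program} and otherwise accounting for non-halting as a failure to produce an answer. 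Pinning down this reduction cleanly is the step I would treat most carefully.
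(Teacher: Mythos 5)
Your argument for the deterministic, well-behaved case is exactly the paper's Figure~\ref{fig:godel} sketch, and your probabilistic extension (forcing $p=0$, then $q=0$) is sound \emph{provided} $A$ halts with probability $1$ on the constructed instance. The genuine gap is the one you flag at the end and do not resolve: when $A$ is not well-behaved, the self-referential call to $A$ inside \nicecode{G\ddot{o}del\_program} may diverge, the program inherits that divergence, and it is then \emph{genuinely} not well-behaved. At that point the instance no longer witnesses non-AGI-ness at all: Definition~\ref{def:agi} requires an instance that a human can \emph{provably} solve, but the human can only derive the conditional ``if $A$ halts on this input then the program is well-behaved,'' and proving that $A$ actually halts here is not something trust (the assumption of safety) gives you. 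Your proposed fix --- ``treat a non-halting run of $A$ as a failure to solve the instance and reduce to the well-behaved case'' --- does not close this, because the problem is not with $A$'s score on the instance but with the human's side of the separation: there is no longer a provable correct answer to exhibit. A safe, trusted, non-well-behaved $A$ simply evades your construction.

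The paper closes this gap with a genuinely different construction (Algorithm~\ref{alg:godel_program}): \nicecode{G\ddot{o}del\_program} never runs $A$ at all. It takes a pair $(P,T)$ where $T$ is a purported execution trace of $A$ on $P$, and only performs the always-terminating syntactic check of whether $T$ is a valid trace ending in the verdict `well-behaved'; the diagonalization is then on the \emph{output value} (returning the concatenation of `Not ' with $P(P,T)$, which cannot equal $P(P,T)$ for a deterministic program) rather than on halting behavior. This makes Lemma~\ref{lem:well_behaved} go through with no assumption that $A$ is deterministic or well-behaved, and it handles randomness for free since a single trace fixes one realization of $A$'s coins while safety constrains every realization on which $A$ emits a claim. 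If you want to keep your run-$A$-inline construction, you must either add well-behavedness of $A$ as a hypothesis (as the paper does for its converse-direction results in Appendix~\ref{sec:app_planning_halt}) or switch to the trace-as-input device; as written, your reduction step is the missing idea, not a routine detail.
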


\begin{proof}
  %  We prove the statement for the task of program verification, which is sufficient to show to prove the claim. 
  Our proof can be regarded as a restatement of Gödel's proof,  presented here in the context of program verification. In Fig. \ref{fig:godel} we sketch the basic version of the argument, for the case when the AI system $A$ is deterministic and well-behaved.

We now proceed with the proof, which relaxes the assumptions in Fig. \ref{fig:godel} of $A$ being  deterministic and well-behaved. Consider any AI system $A$ which takes as input program $P$ and outputs `well-behaved', `not well-behaved' or `don't know'. Our construction will leverage the \emph{trace} of the system $A$ run on some program $P$, which is just the execution trace of the system $A$ when it is given program $P$ as input. %Define $Trace_A(p)$ as any valid execution trace of the program $p$ on the AI system $A$ (we allow any valid trace to accommodate probabilistic programs which can have multiple execution traces on an input). 
Now consider the program instance in Algorithm \ref{alg:godel_program}.

\begin{algorithm}[h]
\caption{Gödel\_program}
\label{alg:godel_program}
\begin{algorithmic}[1]
\Procedure{\nicecode{G\ddot{o}del\_program}}{$P, T$}
%\If{$P$ is not a syntactically valid program}
%\State \Return 0
\If{$T$ is not a syntactically-valid trace of the system $A$ evaluated on program $P$}
\State \Return 0
   % \State Check if input string $x$ has the format $(p, Trace_A(p))$ where $p$ is a syntactically-valid program and $Trace_A(p)$ is a syntactically-valid trace of the system $A$ evaluated on program $p$. If not, return 0.

    \ElsIf{$T$ outputs $P$ is `well-behaved'}
              %  \While{\nicecode{true}} \Comment{run indefinitely}
     %   \EndWhile
     \If{$(P,T)$ is a valid input to program $P$}
     \State \Return concatenation(`Not ', $P(P,T)$)
     \EndIf
    \Else
    \State \Return 0
    \EndIf
    \EndProcedure

\end{algorithmic}
\end{algorithm}

{Note that \nicecode{G\ddot{o}del\_program} involves running the program $P$ on the input pair $(P,T)$. \nicecode{G\ddot{o}del\_program} checks that $(P,T)$ is a valid input type to the program $P$, and we can also  regard the input $(P,T)$ as one input to $P$ that is a pair of entities: a program $P$ and a trace $T$. We now show that if $A$ is accurate, then \nicecode{G\ddot{o}del\_program} is well-behaved. }

\begin{lemma}\label{lem:well_behaved}
    If $A$ is accurate, then \nicecode{G\ddot{o}del\_program} is well-behaved.
\end{lemma}
\begin{proof}
    We first claim that the \textbf{if} and \textbf{else if} conditions in steps 2, 4 and 5 always terminate (if the program enters those steps). % Step 2 always terminates since it involves checking the syntax of the program. 
    Step 2 always terminates since it involves checking if every step of the trace $T$ is a valid step which the AI system $A$ can take. Step 4 just involves checking the output of the trace, and step 5 also terminates since the step involves checking if the input matches the required format for the program. 

    Now consider the case when the \textbf{else if} condition in step 4 is satisfied. This happens when the trace $T$ concludes that $P$ is well-behaved, and since the system $A$ is accurate, then $P$ must be well-behaved in that case. Hence the execution in step 6 always terminates, and hence \nicecode{G\ddot{o}del\_program}  terminates for that input. 

    On the other hand, if the trace does not conclude that $P$ is well-behaved, then the program enters the \textbf{else} condition in lines 8 and 9, and immediately terminates. 
 Therefore if $A$ is accurate, then  \nicecode{G\ddot{o}del\_program} is  well-behaved.
\end{proof}

Note that if $A$ is trusted by some set $H$ then individuals in $H$ assume that $A$ is accurate. Therefore if $A$ is accurate and trusted by $H$ then individuals in $H$ can prove that \nicecode{G\ddot{o}del\_program} is well-behaved, indeed the proof of Lemma~\ref{lem:well_behaved} provides a short proof of this since the condition of $A$ being accurate is satisfied due to trust. Next, we show that the system $A$ cannot solve this instance.

%Note that the phrase `If $A$ is accurate' is the definition of $A$ being trusted, and therefore the proof of Lemma~\ref{lem:well_behaved} provides a short proof that \nicecode{G\ddot{o}del\_program} is well-behaved for a accurate and trusted system. Next, we show that the system $A$ cannot solve this instance.

\begin{lemma}\label{lem:a_fails}
    $A$ can never output `well-behaved' for \nicecode{G\ddot{o}del\_program}.
\end{lemma}
\begin{proof}
    The proof is by contradiction. We first note that \nicecode{G\ddot{o}del\_program} is a deterministic program, and can only have a single output for a given input. Suppose there is a valid trace $T_G$ for the AI system $A$ which outputs `well-behaved' for \nicecode{G\ddot{o}del\_program}. Consider  \nicecode{G\ddot{o}del\_program}(\nicecode{G\ddot{o}del\_program}, $T_G$). Then the output of {step 6} differs  from \nicecode{G\ddot{o}del\_program}(\nicecode{G\ddot{o}del\_program}, $T_G$), which is also the output of \nicecode{G\ddot{o}del\_program} on the input (\nicecode{G\ddot{o}del\_program}, $T_G$). This is a contradiction since a deterministic program cannot have two outputs on the same input, and hence $A$ can never output `well-behaved' for \nicecode{G\ddot{o}del\_program}.
\end{proof}

Therefore, if $A$ is accurate and trusted, then it cannot be a human-level reasoning system. Note that the proof illustrates a sharp tension between trust and human-level reasoning for an accurate system. As long as some set of individuals trust an accurate system $A$, they can provably solve a task instance which cannot be solved by $A$. If there is a large set of individuals who trust the accurate system $A$, then a large set of individuals can solve a task instance which cannot be solved by $A$.
    \end{proof}

A few notes about the results and the setting are in order.

\begin{itemize}
    \item The result is agnostic to the type of system $A$ (e.g. $A$ being a large language model or otherwise), it only requires $A$ to be programmatically defined so that \nicecode{G\ddot{o}del\_program} is a valid program. We also note that the result in fact holds even if $A$ is a non-deterministic Turing machine, since it only concerns traces of the system $A$.
    \item The result also holds for randomized systems. It establishes that an accurate, randomized system $A$ cannot output `well-behaved' on \nicecode{G\ddot{o}del\_program} with any probability greater than 0 (since otherwise there is a trace which concludes `well-behaved', in which case the output of  \nicecode{G\ddot{o}del\_program} is different from its own output).
\end{itemize}

We discuss some further, higher-level remarks about our result in the discussion in Section \ref{sec:disc}. 

%In Appendix \ref{sec:godel2}, we prove a similar result for verifying that a program  is not well-behaved.

\subsection{Planning}\label{sec:planning}

The next task we consider is planning, a long-studied task in artificial intelligence \citep{lavalle2006planning,russell2016artificial}. Planning is also considered important for general-purpose cognitive capabilities \citep{goertzel2007agi}.

\begin{definition}[Planning]
In a planning problem we are given a sequence of states, a set of associated moves, a start state and a desired goal state. For any state and move pair, there is an explicit program {(which is provided as part of the problem specification)} which returns the next state (certain moves may be illegal and may return in `not allowed' states).  The task is to find a sequence of moves which end up in the goal state from the start state, or to prove that it is not possible to reach the goal state from the start state.
\end{definition}

For clarity of exposition, we consider deterministic planning instances and deterministic AI systems in this section. In Appendix \ref{sec:app_planning_halt}, we also consider randomized AI systems and randomized problem instances.

\begin{theorem}\label{thm:planning}
    If a deterministic AI system is accurate and trusted, then it cannot be a human-level reasoning system for planning. In particular, for such a system there is a planning problem instance for which the system outputs `don't know' but there is a short proof that the planning problem has no winning moves.
\end{theorem}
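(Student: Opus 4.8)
The plan is to mirror the program verification proof (Theorem~\ref{thm:godel1}), porting the diagonalization into the language of planning. The key idea is that a planning instance whose transition program simulates the AI system $A$ can encode the same self-referential contradiction as \nicecode{G\ddot{o}del\_program}. First I would construct an explicit planning instance, call it $\mathcal{G}$, whose states and moves are designed so that reaching the goal state corresponds precisely to $A$ abstaining (outputting `don't know') on $\mathcal{G}$ itself, while the goal is \emph{unreachable} whenever $A$ outputs a winning plan for $\mathcal{G}$. Since the transition function is given as an explicit program (as the definition permits), I can have it run $A$ on the instance $\mathcal{G}$ — using a trace argument analogous to the one in Algorithm~\ref{alg:godel_program} to keep all moves computable and guaranteed to terminate — and branch on $A$'s output. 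Concretely, I would arrange that from the start state the only legal moves lead to `not allowed' states unless $A$ fails to certify a winning plan, so that a plan exists if and only if $A$ does \emph{not} output one.

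The second step is the diagonal contradiction, in direct analogy to the second lemma above. Suppose $A$ outputs a winning plan (with nonzero probability, in the deterministic case with certainty) for $\mathcal{G}$. By the construction, the existence of such an output makes the goal \emph{unreachable}, so the plan $A$ produced cannot actually reach the goal — but then $A$ has made a false claim, contradicting safety. Hence a safe $A$ can never output a winning plan for $\mathcal{G}$. Given that $A$ never outputs a winning plan, the construction forces the goal to be genuinely unreachable, i.e. $\mathcal{G}$ has no winning moves. The third step supplies the human side: because $A$ is trusted, the assumption of safety is available as a premise, and the very argument just given is a short, scientifically acceptable proof that $\mathcal{G}$ has no winning plan. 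So a human can provably solve $\mathcal{G}$ (by certifying unreachability), yet a safe system can never output a winning plan, and — to complete the separation — I must also rule out that $A$ safely outputs `no winning plan exists'. This second branch is handled symmetrically: if outputting `no winning plan' were safe, I would design $\mathcal{G}$ so that this output itself opens up a reachable path to the goal, again contradicting safety. Thus $A$ is forced to abstain, establishing the `don't know' claim in the theorem statement.

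The main obstacle I anticipate is the encoding in the first step: planning transitions are local state-to-state maps, so I must ensure the self-reference (the instance referring to $A$ run on itself) is expressible within the transition program without circularity in the problem \emph{specification}. The trace trick from Algorithm~\ref{alg:godel_program} is what resolves this — by passing a candidate trace $T$ as part of the state (or as an auxiliary input verified for syntactic validity) rather than invoking an oracle for $A$, every transition remains an explicit, always-terminating computation, which is exactly what the definition of planning requires. The other delicate point is making both the ``winning plan'' and the ``no winning plan'' outputs lead to contradictions under safety simultaneously; I would design the instance so that the goal's reachability is logically equivalent to $A$ \emph{not} asserting reachability, which symmetrizes the argument and forces abstention. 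I expect the randomized extension (deferred to Appendix~\ref{sec:app_planning_halt}) to follow by the same probabilistic bookkeeping used to relax determinism in the program verification argument, so the deterministic case carries the conceptual weight.
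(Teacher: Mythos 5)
There is a genuine gap, and it comes in two parts. First, your construction has the wrong polarity relative to the statement you are proving. You arrange that ``a plan exists if and only if $A$ does \emph{not} output one,'' correctly deduce that a safe $A$ can never output a winning plan, and then conclude that ``the construction forces the goal to be genuinely unreachable.'' Those two sentences contradict each other: under your own iff, once $A$ is barred from outputting a plan the goal is \emph{reachable}, not unreachable. What your instance actually yields (after also ruling out the `not reachable' answer) is an instance where a plan provably exists but $A$ must abstain --- that is the variant the paper defers to Appendix~\ref{sec:app_planning_halt}, and it is proved there only under the \emph{additional} hypothesis that $A$ is well-behaved (always terminates), which Theorem~\ref{thm:planning} does not assume. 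To get the stated conclusion (``no winning moves,'' with a short human proof of unreachability) the self-reference must be wired the other way, as in \nicecode{Turing\_program}: the instance should reach the goal precisely when $A$ \emph{asserts} unreachability, so that safety forbids that assertion and the goal is genuinely unreachable.

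Second, your route differs structurally from the paper's in a way that creates the termination problem you flag but do not resolve. The paper does not embed a call to $A$ inside the planning instance at all: it first proves Theorem~\ref{thm:halt_specific} by diagonalizing at the level of programs (\nicecode{self\_Turing\_program} run on itself), and then reduces halting-on-a-given-input to planning by taking states to be the configurations of that program and moves to be single execution steps. The transition program is then a one-step simulator, which trivially terminates, and any non-termination of $A$ shows up only as an infinite path in the configuration graph. In your direct embedding, the transition program must itself run $A$ on $\mathcal{G}$, so if $A$ does not terminate the next-state map is undefined; the trace device from Algorithm~\ref{alg:godel_program} does not obviously repair this, because a planning instance has no input slot in which to supply $T$, and if you instead quantify over traces inside the state space you must re-argue reachability in the case where no complete trace of $A$ exists. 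Both issues disappear if you follow the paper's decomposition: prove the halting statement first, then observe that planning subsumes it via the configuration-graph reduction.
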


We prove this by reduction from a variant of program verification that involves checking whether a given program, input pair halts.

\subsubsection{Halting for a specific program input instance}\label{sec:halt_specific}

We first define the problem of checking halting for a specific program, input instance. 

\begin{definition}[Halting for a specific program input instance]\label{def:halting_fixed}
    Given a deterministic program and an input for the program, check whether the given program halts or does not halt on the given input.
\end{definition}

We show the following result for this halting problem.

% \begin{theorem}\label{thm:halt_specific}
%      If a deterministic system is accurate and trusted, then it cannot be a human-level reasoning system for the task of determining whether a program halts on a specific input instance.
% \end{theorem}

\begin{restatable}{theorem}{haltspecific}\label{thm:halt_specific}
% \begin{theorem}\label{thm:halt_specific}

     If a deterministic system is accurate and trusted, then it cannot be a human-level reasoning system for the task of determining whether a program halts on a specific input instance.
% \end{theorem}
\end{restatable}

We note that Theorem \ref{thm:planning} follows from Theorem \ref{thm:halt_specific}. %This is because a system for planning can be used to solve halting on a specific input instance, where the states and transitions are defined by the program goal, and the goal is to determine if the program reaches any halting (or \textbf{return}) state. 
This is because we can reduce the halting problem for a program-input pair to a planning instance. Given a program $P$ and input $I$, we construct a planning problem where the states correspond to the configurations of $P$ during its execution on $I$, and the moves represent single-step transitions between these configurations. The start state is the initial configuration of $P$ on input $I$, and the goal state is a halting configuration of $P$. The planning task is to determine whether a sequence of moves exists that leads from the start state to the goal state—this is equivalent to determining whether $P$ halts on $I$. Hence, if an accurate and trusted system could solve all such planning instances, it would be able to solve the halting problem in Definition~\ref{def:halting_fixed}, contradicting Theorem~\ref{thm:halt_specific}. This establishes that, under our definitions, an accurate and trusted system cannot be a human-level reasoning system for planning.

The proof of Theorem \ref{thm:halt_specific} appears in Appendix \ref{app:halting_specific}, and is similar to the proof of Theorem \ref{thm:godel1}, and also the next result, Theorem \ref{thm:halt_time_bound}. In Appendix \ref{sec:app_planning_halt}, we prove a similar version of Theorem \ref{thm:halt_specific} where the program provably halts on the input, but the AI system $A$ cannot determine so. This proof requires an additional assumption that $A$ is also well-behaved, i.e. it always terminates on an input, which can be ensured by having a fixed time limit on the execution of $A$. Note that since determining halting on a specific program input instance reduces to planning, this shows that for an accurate, trusted and well-behaved system there are planning instances where humans can provably find a feasible plan, but the system will not be able to solve the instance.

\subsection{Graph reachability}

We now consider the graph reachability problem. Graph reachability can also be regarded as an instance of the search problem, another fundamental problem in artificial intelligence with numerous applications \citep{russell2016artificial}. Graph reachability is closely connected to the planning problem that we defined in the previous section, a distinction we make is that for planning problems the state space can be potentially infinite, whereas for reachability we consider finite-sized graphs. 

\begin{definition}[Graph reachability]
    Given a (possibly directed) graph $G$ and a source-sink pair $u,v$, check whether $v$ is reachable from $u$. {We allow the graph to be defined via an explicit program (which is provided as part of the problem specification). The program takes any vertex $v$ and returns the adjacency list of  $v$.}
    %We allow the graph to be defined implicitly, for any two vertices $v_1, v_2$ there is an explicit program which returns whether there is an edge between $v_1$ and $v_2$.
\end{definition}

We show that accurate, trusted AI systems need time almost as large as the size of the considered graph to solve certain reachability instances which actually admit a simple solution. As in Section \ref{sec:planning}, we consider deterministic AI systems in this section for ease of exposition. In Appendix \ref{sec:app_feasibility_halt}, we extend to randomized  systems.

\begin{theorem}\label{thm:reachability}
  %  For any bounded running time $T$ of a accurate, trusted AI system, 
    For any $T>0$, a fixed constant $c$, and any accurate, trusted, deterministic AI system, there is a graph reachability problem instance of size $T$,  for which the accurate, trusted, deterministic AI system outputs `don't know' if it is run for time at most $T-c$, but there is a short, constant-sized proof that the answer is `not reachable'.
\end{theorem}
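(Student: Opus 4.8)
The plan is to transplant the self-referential construction behind \nicecode{Turing\_program} into a finite graph, while diagonalizing against the time budget $T-c$ rather than against unbounded halting. Concretely, I would build a single reachability instance $(G^*,u,v)$ in which $G^*$ is specified by a small adjacency program $\pi$ (with $T$ and $c$ hard-coded), consisting of an ``un-shortcuttable'' directed path $u=w_0\to w_1\to\cdots\to w_{T-c}$ of length $T-c$ together with a sink $v$, so that $G^*$ has $\approx T$ vertices. The only nontrivial edge is a crucial edge $w_{T-c}\to v$, whose presence is defined self-referentially via the recursion theorem (so that $\pi$ may refer to its own description): the adjacency list of $w_{T-c}$ is $\{v\}$ if $A$, run on $(G^*,u,v)$ for at most $T-c$ steps, has by then committed to the output `not reachable', and $\emptyset$ otherwise. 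Thus $v$ is reachable from $u$ in $G^*$ if and only if $A$ outputs `not reachable' within $T-c$ steps.

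Given this instance, the diagonalization parallels the argument in the proof of Theorem~\ref{thm:halt_specific}. If the safe system $A$ were to output `not reachable' within time $T-c$, then the crucial edge would be present and $v$ would in fact be reachable, contradicting safety; if instead $A$ output `reachable' within time $T-c$, then, since it did not output `not reachable', the crucial edge would be absent and $v$ would not be reachable, again contradicting safety. Hence a safe $A$ run for time at most $T-c$ cannot commit to either answer and its only safe output is `don't know'; and because it did not output `not reachable', the crucial edge is absent and the true answer is `not reachable'. Since trust is exactly the assumption that $A$ is safe, this very chain of reasoning is a short, constant-sized proof (independent of $T$) that $v$ is not reachable, which gives the claimed separation.

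The step I expect to be the main obstacle is making the self-referential adjacency program well-defined, i.e. ruling out infinite regress, and this is precisely where the time bound and the un-shortcuttability of the path are essential. When the guard at $w_{T-c}$ is evaluated it simulates $A$ on $(G^*,u,v)$ for $T-c$ steps, and during that simulation $A$ issues its own adjacency queries; if $A$ could query $w_{T-c}$ within those $T-c$ steps, answering it would require re-evaluating the same guard and the recursion would loop. I would therefore design the path so that $w_{T-c}$ is provably undiscoverable in $T-c$ steps---for instance by naming each $w_{i+1}$ so that its identity is revealed only by querying $w_i$, forcing any algorithm to spend one step per hop---so that within $T-c$ steps $A$ can only touch $w_0,\dots,w_{T-c-1}$, whose adjacency lists are trivial and $A$-independent. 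The simulation then terminates without ever invoking the guarded vertex, and the construction is well-founded; the additive constant $c$ absorbs the $O(1)$ overhead of the source, the sink, the extra step needed to inspect $w_{T-c}$, and the fixed part of $\pi$. Finally, I would note that the same instance arises from a reduction in the style of Theorem~\ref{thm:halt_specific}, viewing the path as a truncated configuration graph of a self-referential program, which also yields the randomized extension deferred to Appendix~\ref{sec:app_feasibility_halt}.
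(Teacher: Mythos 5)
Your route is genuinely different from the paper's. The paper does not diagonalize directly inside the graph: it first proves a time-bounded halting result (Theorem~\ref{thm:halt_time_bound}) via a program \nicecode{Turing\_T} that calls $A(P,I,T)$ and loops forever unless $A$ answers `does not halt in given time limit', and then reduces time-bounded halting to reachability by taking the graph to be the configuration graph of $\mathtt{self\_Turing\_T}(\mathtt{self\_Turing\_T})$ truncated at $T$ steps, with the adjacency program executing a single step of the simulated program. The payoff of that decomposition is that no adjacency query ever has to run $A$ to completion: the cost of the embedded call to $A$ is spread across $T$ one-step transitions, so the adjacency oracle is trivially well-founded and the only self-reference is the standard one at the level of the program.

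Your construction instead hides an entire $(T-c)$-step simulation of $A$ inside the single adjacency query at $w_{T-c}$, and this is where there is a genuine gap. You correctly flag the regress, but the proposed fix --- naming $w_{i+1}$ so that its identity is ``revealed only by querying $w_i$'' --- is asserted rather than established. A deterministic $A$ may query arbitrary vertex names, so you need an adversary argument that the name of $w_{T-c}$ can be chosen to avoid every string $A$ queries in its first $T-c$ steps; but those queries depend on the oracle's answers, which themselves contain the names you are trying to choose (the answer at $w_{T-c-1}$ reveals $w_{T-c}$'s name), and the resulting naming scheme must still be realized by an explicit, self-referencing adjacency program $\pi$. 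This is fixable but is real work that the sketch leaves open, and it is precisely the work the paper's configuration-graph reduction avoids. I would either carry out the adversary/naming argument in full, or restructure the proof as a reduction from time-bounded halting as in Section~3.3.1, where the diagonalization lives in \nicecode{Turing\_T} and the graph is just its step-by-step configuration graph.
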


We prove this by reduction from a variant of program verification that involves checking whether a given program halts within a fixed amount of time.

\subsubsection{Time-bounded halting}

\begin{definition}[Time-bounded halting]\label{def:time_bound}
    Given a deterministic program and an input for the program, check whether the given program halts or does not halt on the given input in a given number of time steps $T$. %There is a global constant $T$ which indicates a time limit for execution. 
 %   The program is allowed to be non-deterministic, and it  halts if it executes \nicecode{return}. It halts in time $T$ if there is a non-deterministic choice of execution states that allows it to execute \nicecode{return} in time $T$.
\end{definition}

\begin{theorem}\label{thm:halt_time_bound}
     If a deterministic system $A$ is accurate and trusted, then it cannot be a human-level reasoning system for time-bounded halting. Specifically for a deterministic, accurate, trusted system $A$ and for any  $T>0$ and a fixed constant $c$, there is a program for which there is a short, constant-sized proof that it does not halt in $T$ steps, but $A$ will output `don't know' if it runs for time at most $T-c$.
\end{theorem}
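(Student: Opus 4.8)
The plan is to run the diagonalization of Theorem~\ref{thm:halt_specific} through a step counter, so that the self-referential program is forced to decide quickly and $A$ is forced to abstain within the prescribed budget. Fix the deterministic, safe, trusted system $A$, which on a time-bounded halting query outputs `halts', `does not halt', or `don't know', and fix a target bound $T>0$.

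For each $T$ I would build a program $B_T$ (with $T$ and a constant $c$ hard-coded) that, on input a program $P$, simulates $A$ on the query ``does $P$ halt on input $P$ within $T$ steps?'' for \emph{at most} $T-c$ steps; if within this budget $A$ outputs `does not halt', then $B_T$ returns $0$ and halts, and otherwise $B_T$ enters an infinite loop. The instance of interest is the self-application $B_T(B_T)$, exactly mirroring the \texttt{Turing\_program} construction, the only changes being that the call to $A$ is capped at $T-c$ steps and that the fast-halting branch is taken precisely when this capped call reports `does not halt'.

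The argument then proceeds through the same two claims as in Theorem~\ref{thm:halt_specific}. First, if $A$ is safe then $B_T(B_T)$ provably does not halt within $T$ steps: were the capped simulation to return `does not halt', $B_T(B_T)$ would halt after at most $(T-c)+O(1)$ steps, which is strictly below $T$ once $c$ is chosen to dominate the fixed $O(1)$ cost of the conditional test and the return; then $B_T(B_T)$ would halt within $T$ steps, contradicting the safety of $A$. Hence the capped simulation cannot report `does not halt', the infinite-loop branch is taken, and $B_T(B_T)$ never halts, in particular not within $T$ steps. This argument is of constant length independent of $T$, and since trust means $A$ is assumed safe, it furnishes the short, scientifically acceptable proof the theorem asks for. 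Second, running $A$ on the same query for time at most $T-c$ must yield `don't know': as $A$ is deterministic, its first $T-c$ steps coincide with $B_T$'s internal capped simulation, which we just saw cannot produce `does not halt'; and it cannot produce `halts' either, since the true answer is that $B_T(B_T)$ does not halt within $T$ steps, so such an output would violate safety. The two claims together give a program-input instance with a constant-sized proof of non-halting within $T$ steps on which $A$ nevertheless abstains under a $T-c$ budget, which is exactly the failure of AGI for time-bounded halting.

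I expect the main obstacle to be the timing bookkeeping that fixes the constant $c$ and the computational model it presupposes. The decisive quantitative step is that simulating $A$ for $T-c$ steps while maintaining a step counter, then evaluating the conditional and returning, costs at most $T-c+O(1)$ steps, so the fast-halting branch completes in fewer than $T$ steps; making this rigorous requires a model (such as a RAM or a multi-tape machine with a dedicated counter) in which a bounded simulation carries only $O(1)$ overhead per simulated step, so that $c$ is a single fixed constant absorbing this overhead rather than a quantity growing with $T$. A second point to state carefully is that the abstention conclusion is a statement about $A$ \emph{run for time at most $T-c$}: a correct but slow `does not halt' verdict that $A$ might only reach beyond the budget is consistent with the construction and does not weaken the claim, precisely because $B_T$ internally discards it. With these in place, the size-$T$ configuration graph of $B_T(B_T)$ then yields the reachability instance of Theorem~\ref{thm:reachability} by the same reduction used for planning.
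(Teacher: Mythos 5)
Your proposal is correct and is essentially the paper's own argument: the paper builds \nicecode{Turing\_T}$(P,I)$, which calls $A(P,I,T)$, returns $0$ on `does not halt in given time limit' and loops forever otherwise, then diagonalizes via \nicecode{self\_Turing\_T}(\nicecode{self\_Turing\_T}) and observes that an answer of `does not halt' produced by $A$ in time $T'\le T-c$ would make the program halt in time $T'+c\le T$, contradicting safety. The only cosmetic difference is that you explicitly cap the internal simulation of $A$ at $T-c$ steps (and invoke determinism to identify the capped run with $A$'s actual run), whereas the paper calls $A$ uncapped and reasons directly about the case where it answers within $T-c$ steps; your remarks on the machine model needed to make the constant $c$ well-defined address a detail the paper leaves implicit.
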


We note that Theorem \ref{thm:reachability} follows from Theorem \ref{thm:halt_time_bound}. This is because time-bounded halting can be reduced to graph reachability (similar to the reduction for planning), where the graph is defined by the states of the program and the goal is to determine if the program reaches a halting state.  Theorem \ref{thm:halt_time_bound} shows that there is a graph of size $T$ where the AI system needs time nearly $T$ to solve reachability, but a human can prove a constant sized proof that the sink vertex is not reachable from the source. 

We now prove Theorem \ref{thm:halt_time_bound}.
\begin{proof}[Proof of Theorem \ref{thm:halt_time_bound}]

Consider any deterministic AI system $A$ which takes as input program $P$, input $I$, and time limit $T$ and outputs `halts in given time limit', `does not halt in given time limit' or `don't know'. Consider the program in Algorithm \ref{alg:turing_fixed_time}, defined for some fixed time limit $T>0$.

\begin{algorithm}[h]
\caption{Turing\_T}
\label{alg:turing_fixed_time}
\begin{algorithmic}[1]
\Procedure{\nicecode{Turing\_T}}{Program $P$, Input $I$}
\If{$A(P,I,T)$ == `does not halt in given time limit'}
\State \Return 0
\Else
   \While{\nicecode{true}} \Comment{run indefinitely}
        \EndWhile
    \EndIf
        \EndProcedure
%    \State  Run AI system $A$ on (Program $P$, Input $I$) to get trace $F$
 %   \State If   trace $F$ proves that $P(I)$ does not halt in time $T$, then return `provably does not halt in time T' and exit \nicecode{Turing\_program\_nondeter}
  %  \State Otherwise run forever
   % \EndProcedure
\end{algorithmic}
\end{algorithm}

We define
\[
\mathtt{self\_Turing\_T}(P) = \mathtt{Turing\_T}(P, P).
\]
% Consider:
% \[
% \mathtt{self\_Turing\_T}(\mathtt{self\_Turing\_T})
% \]

% We define $
% \mathtt{self\_Turing\_T}(P) = \mathtt{Turing\_T}(P, P)$. 
Now consider $
\mathtt{self\_Turing\_T}(\mathtt{self\_Turing\_T})$.

\begin{lemma}
    If $A$ is accurate, then \nicecode{self\_Turing\_T(self\_Turing\_T)} does not halt in time $T$. Moreover, for a fixed constant $c$, if $A$ is accurate and is run for time at most $T-c$ then $A$ will output `don't know' on whether \nicecode{self\_Turing\_T(self\_Turing\_T)} halts in time at most $T$.
    
\end{lemma}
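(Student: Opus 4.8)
The plan is to mirror the structure of the previous halting lemmas, but now carefully track the running time, since the novelty here is quantitative rather than qualitative. The statement has two parts: (i) if $A$ is safe then \nicecode{self\_Turing\_T(self\_Turing\_T)} does not halt in time $T$, and (ii) if $A$ is additionally restricted to run for time at most $T-c$, then it must output `don't know'. I would prove (i) exactly as in Theorem~\ref{thm:halt_specific}: if the program halted (via the \textbf{if} block in line~3), that would require $A(P,I,T)$ to have returned `does not halt in given time limit' on the self-referential input; but then the program does halt, contradicting safety. So under safety the only remaining branch is the infinite loop, and the program does not halt at all, in particular not within $T$ steps.

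For part (ii), the key observation is that the total running time of \nicecode{self\_Turing\_T(self\_Turing\_T)} is essentially the running time of $A$ on its input plus a fixed constant-sized overhead (the comparison in line~2, the branch, and entering the loop). I would make this precise by letting $c$ be exactly this constant overhead, so that if $A$ runs for time at most $T-c$, then the whole program runs for at most $T$ steps before either returning or entering the loop. The argument then splits on what $A$ outputs. If $A$ outputs `halts in given time limit', that contradicts part~(i), which says the program does not halt in time $T$ (hence $A$ would be unsafe). If $A$ outputs `does not halt in given time limit', then by line~2 the program returns~$0$ and halts within $T$ steps, again contradicting part~(i) and violating safety. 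Since a safe $A$ cannot make either claim, the only remaining option is `don't know'.

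The main obstacle I anticipate is pinning down the constant $c$ rigorously, i.e. justifying that the overhead of \nicecode{Turing\_T} beyond the call to $A$ is genuinely bounded by a fixed constant independent of $T$ and of the input. This requires arguing that evaluating the conditional on the output of $A$, performing the branch, and stepping into the loop each cost $O(1)$ under whatever machine model is implicit, and that these costs do not grow with $T$. One subtlety is that the time limit $T$ is passed as an argument to $A$ and its description has size $\log T$, so I would want to either absorb this into the model's unit-cost assumptions or note that the additive overhead remains constant-sized for the purposes of the separation (which only needs $T-c$ versus $T$). Once the constant-overhead bookkeeping is made explicit, the logical core is identical to the earlier halting arguments: safety forecloses both substantive answers, leaving only abstention, while the safety assumption itself (guaranteed under trust) furnishes the short proof that the program does not halt in time $T$.
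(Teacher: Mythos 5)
Your part (ii) is essentially the paper's argument: the overhead of \nicecode{Turing\_T} beyond the call to $A$ is a fixed constant $c$, so if $A$ runs for at most $T-c$ and outputs `does not halt in given time limit', the whole program returns $0$ within $T$ steps, contradicting safety; the output `halts in given time limit' is ruled out by part (i); hence only `don't know' remains. (The paper spells out only the `does not halt' case and leaves the other implicit, so your explicit case split is if anything slightly more complete, and your bookkeeping of the constant $c$ matches the paper's.)

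Your argument for part (i), however, has a genuine flaw. You claim that if the program halts via the \textbf{if} block then ``the program does halt, contradicting safety,'' and conclude that the program does not halt at all. This is not a contradiction: $A$'s claim is that the program does not halt \emph{within $T$ steps}, and the program merely halting at some point --- possibly long after $T$ steps, if $A$ itself runs for more than $T-c$ steps before answering --- is perfectly consistent with that claim and with safety. Part (i) places no bound on $A$'s running time, so you cannot rule out the \textbf{if} branch, and your intermediate conclusion that the program never halts is false in general. The paper's argument runs in the other direction: \emph{if} the program enters the \textbf{if} block, then $A$ output `does not halt in given time limit', and by safety that claim is true, so the program does not halt within $T$ steps (even though it does halt eventually); if the program never enters that block, it loops forever. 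Either way it does not halt within $T$ steps, which is all part (i) asserts. This distinction is the heart of the lemma: the outright contradiction you invoke only materializes once $A$'s running time is capped at $T-c$, which is precisely what part (ii) exploits, so importing it into part (i) conflates the time-bounded claim with the unbounded one from Theorem~\ref{thm:halt_specific}.
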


\begin{proof}
    
If \nicecode{self\_Turing\_T(self\_Turing\_T)} halts, it can only be because it enters the \textbf{if} block in line 3. However, it only enters this block if $A$ determines that it does not halt in time $T$. Since $A$ is accurate, if the program enters the  \textbf{if} block in line 3 then it must not halt in time $T$, and hence \nicecode{self\_Turing\_T(self\_Turing\_T)} cannot halt in time $T$. 

Note that the execution of steps 2 and 3 of the program only take some fixed constant $c$ steps outside the execution of $A$ on $(\mathtt{self\_Turing\_T}, \mathtt{self\_Turing\_T}, T)$. Therefore if the AI system $A$ runs for time $T'$ and outputs that \nicecode{self\_Turing\_T(self\_Turing\_T)} does not halt in time $T$, then \nicecode{self\_Turing\_T(self\_Turing\_T)} halts in time $T'+c$. If $T'<T-c$, then the program does halt in total time $T$, which contradicts accuracy. Therefore, an accurate AI system $A$ must output `don't know' if it runs for time at most $T-c$, for some fixed constant $c$.
\end{proof}

As in the previous proof, if $A$ is accurate and trusted by $H$, then individuals $h \in H$ assume that $A$ is accurate and can hence prove that  \nicecode{self\_Turing\_T}(\nicecode{self\_Turing\_T})
does not halt in time $T$, even though $A$  cannot solve this instance if it is accurate and run for time at most $T-c$. %Hence the system cannot be a human-level reasoning system if it is accurate and trusted.
\iffalse
Finally, note that the assumption of $A$ being accurate is satisfied for a trusted system $A$, therefore for a
trusted system we have a short proof that \nicecode{self\_Turing\_T}(\nicecode{self\_Turing\_T})
does not halt in time $T$, even though $A$  cannot solve this instance if it is accurate and run for time at most $T-c$. Hence the system cannot
be a human-level reasoning system if it is accurate and trusted.
\fi
\end{proof}

At the end of Section \ref{sec:halt_specific}, we discussed an additional result about planning in the case where a feasible plan exists. We also show a similar result for graph reachability, discussed in Appendix~\ref{sec:app_feasibility_halt}.

% . In Appendix~\ref{sec:app_feasibility_halt}, we prove a similar version of Theorem \ref{thm:halt_time_bound} under an additional assumption that $A$ always terminates in time $T$. We show that for such a system $A$ there is an instance which provably halts in  time $T+c$ (for some constant $c$) if $A$ is accurate, but the accurate system $A$ cannot determine so. As before,  since determining halting within a fixed time limit reduces to graph reachability on finite-sized graphs, this shows that for an accurate, trusted system with an upper bound on its running time, there are graph reachability instances where humans can provably find a path, but the system will not be able to solve the instance in time slightly less than the size of the graph.

\iffalse
\begin{lemma}
    For any accurate, trusted AI system there is a graph reachability problem instance for which the AI system outputs `don't know' but there is a (short) proof that the answer is `not reachable'.
\end{lemma}

\begin{proof}
    Consider any AI system $A$ which takes as input graph $G$ and source-sink pair $u,v$ and outputs 'well-behaved', 'not well-behaved' or 'don't know'. Define $Trace_A(p)$ as any valid execution trace of $p$ on the system $A$ (this is to accommodate probabilistic  programs which can have multiple execution traces on an input). Now consider the following program instance.

\end{proof}

\fi
\section{Impossibility Result for Other Semantic Properties}\label{sec:safety}

In this section, we extend the program verification result from halting to other properties of programs. This includes properties such as verifying if a program executes certain pre-defined ``harmful" behavior, which could be important from the perspective of safety. More formally, we first define the task of determining if an input program executes certain {harmful} behavior. 

\begin{definition}
    [Harmful state execution]\label{def:harmful}
    A program is ``harmless" if it does not enter certain pre-defined ``harmful" states during execution, and ``harmful" otherwise. 
    % An AI system is safe if it does not classify a ``harmful" program as being ``harmless".
\end{definition}

We show an impossibility result for this property which is analogous to our result for verifying if an input  program halts.

\begin{theorem}\label{thm:safety}
Consider an AI system $A$ which is well-behaved and itself does not enter pre-defined ``harmful" states during its execution. Then if $A$ is accurate and trusted for verifying harmful state execution (Definition~\ref{def:harmful}), then it cannot be a human-level reasoning system.
\end{theorem}

This result follows as a corollary of a more general theorem, which extends our result for halting to any \emph{non-trivial, semantic} property of a program. These are the same conditions under which Rice's theorem extends Turing's undecidability result \citep{rice1953classes}.

\begin{definition}[Non-trivial, semantic property \citep{rice1953classes}]
    A semantic property is a property of program which concerns its behavior (e.g. ``does the program always terminate?") as opposed to its syntax (e.g. ``does the program have a while loop?"). A non-trivial property is a property which is neither true for all programs, nor false for all programs.
\end{definition}

Examples of semantic properties that we have already discussed are halting and harmful  state execution. Another semantic property is correctness, for example if a program intended to determine if an input number is prime correctly outputs whether the  number is prime. 

We now state the result, proved in Appendix \ref{app:rice}. The result applies to AI systems $A$ which are well-behaved, and whose execution does not trivially lead to the desired property $\pi$ being satisfied. Since our constructions are self-referential, this condition ensures that the program does not automatically have the property $\pi$ by virtue of the execution of $A$. For instance, if $A$ itself always entered ``harmful" states during its execution, then a program which always calls $A$  also trivially enters ``harmful" states. Note that from the perspective of safety, if $A$ itself entered ``harmful" states then it would also have unsafe behavior.

% The proof of the result appears in Appendix \ref{app:rice}.

\begin{restatable}{theorem}{rice}\label{thm:rice}
% \begin{theorem}\label{thm:rice}
    Consider any non-trivial, semantic property $\pi$ of programs. Consider an AI system $A$ which is well-behaved and has the property that if some program $P$ calls $A$ during its execution then  the execution of $A$ never automatically leads to the property $\pi$ being satisfied for program $P$. 
   Then if $A$ is accurate and trusted for the task of verifying if an input program has property $\pi$, then it cannot be a human-level reasoning system for this task.
% \end{theorem}
\end{restatable}

\section{Impossibility Result for Calibration}\label{sec:calibration}

In this section, we define a relaxed notion of accuracy. The notion is derived from the usual notion of calibration, a well-studied notion for ensuring reliability of a model \citep{dawid1982well,van2019calibration}.

\begin{definition}[Calibration for Program Verification]\label{def:calibration_safety}
    We define a system $A$ to be calibrated for program verification if for any program $P$ with input $I$:
    \begin{enumerate}
        \item If $A$ outputs `halts' with some probability $p>0$ when given program $P$ and input $I$, then the probability of $P$ halting on $I$ lies in $[p-0.25,p+0.25]$.
        \item If $A$ outputs `does not halt' with some probability $p>0$ when given program $P$ and input $I$, then the probability of $P$ not halting on $I$ lies in $[p-0.25,p+0.25]$.
    \end{enumerate}
\end{definition}

Note that similar to the definition of accuracy (Definition \ref{def:accuracy}), calibration does not put any requirement on the system if it decides to abstain with probability 1 on a given input.  We show that if a system is calibrated, and in addition is also well-behaved, then it fails on certain instances which provably terminate with good probability.

% \begin{theorem}\label{thm:halting_random}
%     If the AI system $A$  well-behaved and calibrated for program verification, then there is a program $P$ which provably halts with probability at least $0.99$, but $A$ abstains with probability 1 on the program $P$. %the probability of $A$ outputting that $P$ halts is 0.
% \end{theorem}

\begin{restatable}{theorem}{calibration}\label{thm:halting_random}
% \begin{theorem}\label{thm:halting_random}
    If the AI system $A$  is well-behaved and calibrated for program verification, then there is a program $P$ which provably halts with probability at least $0.99$, but $A$ abstains with probability 1 on  $P$. %the probability of $A$ outputting that $P$ halts is 0.
   
% \end{theorem}
\end{restatable}

We note that Theorem \ref{thm:halting_random} implies a similar impossibility result as Theorem \ref{thm:godel1} but with a relaxed notion of accuracy and a corresponding notion of trust. In the context of calibration, trust in Definition~\ref{def:trust} is the assumption that the system is calibrated. Then Theorem \ref{thm:halting_random} implies that for a well-behaved, calibrated and trusted system $A$, there is a program which can be proven to halt with probability at least 0.99, but $A$ will abstain with probability 1 on the program. Therefore, a well-behaved, calibrated and trusted system cannot be a human-level reasoning system.

%We now prove Theorem \ref{thm:halting_random}.

Theorem \ref{thm:halting_random} is proved in Appendix \ref{app:calibration}. The proof is similar to earlier proofs, but requires an extra step of using a best arm identification algorithm from the multi-armed bandit literature to determine if the probability of the system giving a certain answer is greater than some threshold.

\section{Discussion}\label{sec:disc}

Our results show that accuracy, trust and human-level reasoning are mutually incompatible. We further discuss implications of the result and some possible critiques and clarifications.

\begin{itemize}

    \item \emph{Worst-case nature of the results:} While we demonstrate specific task instances which are not solvable by certain systems, we note that the system could still solve a vast number of interesting instances. However, the result still points to certain barriers which cannot be overcome by accurate, trusted  systems. Given the significant interest and economic capital being devoted to building accurate or reliable human-level reasoning, we believe it is important to understand the barriers fundamental to any such technology. By way of analogy, Gödel's and Turing's results pointed to fundamental barriers to mathematics and computation. Though these barriers were worst-case, they identified the limits of what is possible and what is not possible. Similarly, we believe that it is important to outline the limits of what is possible in the context of AI and requirements of accuracy and trust. Somewhat more speculatively, note that our constructions rely on self-referential calls to the AI system, and when systems have general-purpose capabilities, such calls may not be implausible.
    \item \emph{Circumventing the results by augmenting the AI system:} {One may attempt to circumvent the impossibility result by augmenting the AI system, such as by appending new  axioms to the system if it is formalized axiomatically. For instance, one could solve  \nicecode{G\ddot{o}del\_program} (Algorithm \ref{alg:godel_program}) defined with respect to some AI system  $A$, by designing a new iteration of $A$, say $A'$, which is trained to solve the  \nicecode{G\ddot{o}del\_program} instance for $A$.  However, since our construction is inherently self-referential, this strategy only pushes the problem one step further. For any such extension $A'$, we can construct a new version of \nicecode{G\ddot{o}del\_program} defined with respect to $A'$, and the same impossibility result applies again.} %In other words, no finite augmentation of axioms can close the gap: the construction is inherently self-referential and reappears relative to the extended system. This mirrors G\ddot{o}del’s incompleteness phenomenon in logic and Turing’s undecidability result for the halting problem—each time one tries to “patch” the system, a new hard instance emerges.}
    
    %For \nicecode{G\ddot{o}del\_program} (Algorithm \ref{alg:godel_program}) defined with respect to some AI system  $A$, one can design a new iteration of $A$, say $A'$, which has additional axioms built in and can solve the given \nicecode{G\ddot{o}del\_program} instance. However, we can define a  version of \nicecode{G\ddot{o}del\_program} with respect to this new system $A'$, for which the result again applies. 

    % \item \emph{Worst-case nature of the results:} While we demonstrate specific task instances which are not solvable by certain systems, we note that the system could still solve a vast number of interesting instances. However, the result still points to certain barriers which cannot be overcome by safe, trusted  systems. Given the significant interest and economic capital being devoted to building accurate or reliable human-level reasoning, we believe it is important to understand the barriers fundamental to any such technology. Somewhat more speculatively, note that our constructions rely on self-referential calls to the AI system, and when systems have general-purpose capabilities, such calls may not be implausible.

    \item \emph{Limitations of human reasoning:} %{By G\ddot{o}del's incompleteness theorem, no proof system can be both consistent and complete. We do not claim in this paper that human reasoning has a fundamentally different nature and is superior to artificial systems (see also the discussion of the Penrose-Lucas argument in Section \ref{sec:related}).} In fact, 
    We note that there is a long line of work on studying the limitations of human reasoning in cognitive science and other fields, and it has long been emphasized  that human reasoning is  resource-bounded and error-prone \citep{simon1957models,tversky1974judgment}. However, our goal is not to argue for strict superiority of human reasoning over AI, but to show a separation: for accurate, trusted AI systems there are instances that humans can solve, but which are not solvable by the system.

\end{itemize}

%Our findings show that achieving  safe, trusted, AGI systems  may  encounter deep structural limits analogous to those uncovered by G\ddot{o}del and Turing. As such, the path to AGI may require not just better systems, but more precise understanding of which properties can—and cannot—coexist.

\section*{Acknowledgement}

We thank Prabhakar Raghavan, Mukund Raghothaman and anonymous reviewers  for insightful discussions and helpful feedback. VS was supported by NSF  awards 2239265, 2346058 and an Okawa Foundation Research Grant. The work was done in part while VS was visiting the Simons Institute for the Theory of Computing. Any opinions, findings, and conclusions or recommendations expressed in this material are those of the author(s) and do not necessarily reflect the views of any funding agencies such as the National Science Foundation.

\bibliography{references}
\bibliographystyle{unsrtnat}

\appendix

\section{Additional results}

This section proves some additional results discussed in the main text.%Section \ref{sec:technical}.

%\subsection{Impossibility results for program verification}

 \subsection{Proof of Theorem \ref{thm:halt_specific}}\label{app:halting_specific}

\haltspecific*
\begin{proof}[Proof of Theorem \ref{thm:halt_specific}]

Consider any AI system $A$ which takes as input program $P$ and input $I$ and outputs `halts', `does not halt' or `don't know'.
Now consider the program instance in Algorithm \ref{alg:turing_program}.

\begin{algorithm}[h]
\caption{{Turing\_program}}
\label{alg:turing_program}
\begin{algorithmic}[1]
\Procedure{\nicecode{Turing\_program}}{Program $P$, Input $I$}
\If{$A(P,I)$ == `does not halt'}
\State \Return 0
\Else
   \While{\nicecode{true}} \Comment{run indefinitely}
        \EndWhile
    \EndIf
        \EndProcedure

\end{algorithmic}
\end{algorithm}

Now define 
\[
\mathtt{self\_Turing\_program}(P) = \mathtt{Turing\_program}(P, P).
\]
We consider:
\[
\mathtt{self\_Turing\_program}(\mathtt{self\_Turing\_program})
\]

\begin{lemma}
If $A$ is accurate then \nicecode{self\_Turing\_program(self\_Turing\_program)} does not halt, but the AI system  $A$ cannot prove that it does not halt. %(change proof to trace)
\end{lemma}

\begin{proof}
Note that if $A$ is accurate then the program cannot halt because of entering the \textbf{if} block in line 3, since the program only enters this block if the accurate system $A$ determines that \nicecode{self\_Turing\_program(self\_Turing\_program)} does not halt. If $A$ does not determine that the program does not halt, then the program enters the infinite loop and never halts. Therefore, \nicecode{self\_Turing\_program(self\_Turing\_program)} does not halt if $A$ is accurate.

The second part of the lemma has a similar proof. If $A$ determines that the program input pair \nicecode{self\_Turing\_program(self\_Turing\_program)} does not halt, then it does halt and we have a contradiction since $A$ is accurate. Therefore if $A$ is accurate then \nicecode{self\_Turing\_program(self\_Turing\_program)} does not halt, but $A$ cannot determine that the program input pair \nicecode{self\_Turing\_program(self\_Turing\_program)} does not halt.

\end{proof}

%Finally, we note that the phrase `If $A$ is accurate' is the definition of $A$ being trusted, therefore for a trusted system we have a short proof that \nicecode{self\_Turing\_program(self\_Turing\_program)} does not halt, even though the system cannot solve that instance if it is accurate. Hence the system cannot be a human-level reasoning system if it is accurate and trusted.

{Finally, if $A$ is accurate and trusted by $H$, then individuals $h \in H$ assume that $A$ is accurate and can hence prove that \nicecode{self\_Turing\_program(self\_Turing\_program)} does not halt, even though the system cannot solve this instance if it is accurate.} This completes the proof of the theorem. %Hence the system cannot be a human-level reasoning system if it is accurate and trusted.

\end{proof}

 \subsection{Impossibility of solving planning when a feasible plan exists}\label{sec:app_planning_halt}

In this section, we prove a similar result to Theorem \ref{thm:halt_specific}, for the case where the program terminates on the given input. We also strengthen the result in Theorem \ref{thm:halt_specific} to allow for randomized AI systems, and randomized programs which may halt with some probability on an input. We first extend Definition~\ref{def:halting_fixed} to allow for randomized programs.

\begin{definition}[Halting for a specific program input instance for randomized programs]
    Given a program, input pair, check whether on the given input the given (possibly randomized) program `always halts', `halts on some randomness but not all randomness' or  `never halts'.
\end{definition}

\begin{theorem}\label{thm:halting_fixed2}
    If the AI system $A$ is accurate and well-behaved for determining halting on a specific program input pair, then there is a program input instance pair for which there is a short proof that the instance always terminates, but $A$ cannot determine that the instance always terminates (the probability of $A$ giving the answer `always halts' is 0). 
\end{theorem}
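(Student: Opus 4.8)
The plan is to dualize the self-referential construction from the proof of Theorem~\ref{thm:halt_specific}. There the program was rigged to halt exactly when the safe system $A$ wrongly claimed non-halting; here I would instead build a program that halts exactly when $A$ does \emph{not} assert that it always halts. Concretely, I would take $\mathtt{Dual\_program}(P,I)$ to be the program that first runs $A(P,I)$, enters an infinite loop if $A$ outputs `always halts', and otherwise returns $0$ and halts. Defining
\[
\mathtt{self\_Dual}(P) = \mathtt{Dual\_program}(P,P),
\]
I would then analyze $\mathtt{self\_Dual}(\mathtt{self\_Dual})$, whose single internal call is exactly $A(\mathtt{self\_Dual},\mathtt{self\_Dual})$, so that $A$ is queried about the very instance under analysis.

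For intuition, consider first a deterministic $A$. Since $A$ is well-behaved the internal call terminates with some verdict. If that verdict were `always halts' the program would enter the infinite loop and never halt, contradicting safety, since a safe system's assertion of `always halts' must be true. Hence $A$ cannot output `always halts', the program takes the else branch and returns $0$, and so it always halts. Because trust is the assumption of safety, this reasoning is itself a short, scientifically acceptable proof that the instance always halts --- a proof that nonetheless does not enable $A$ to emit the verdict `always halts'.

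To obtain the full randomized statement I would let $q$ denote the probability, over $A$'s internal randomness, that $A(\mathtt{self\_Dual},\mathtt{self\_Dual})$ outputs `always halts'. Well-behavedness guarantees that each internal call returns, so the program halts precisely on the runs where $A$'s verdict differs from `always halts'; hence its halting probability is $1-q$. If $q>0$ then $A$ outputs `always halts' with positive probability while the true halting probability is $1-q<1$, violating safety; therefore $q=0$. Consequently the instance halts with probability $1$ --- and carries the short proof above --- while $A$ outputs `always halts' with probability $0$, which is exactly the claim.

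The step I expect to be the main obstacle is making the randomized accounting airtight. I must ensure the program consults $A$ only once, so that the halting probability is cleanly $1-q$ rather than becoming entangled through repeated self-referential invocations, and I must fix the correct reading of safety for a randomized system: any verdict emitted with positive probability must be a true statement about the instance's classification (`always halts', `halts on some but not all randomness', or `never halts'). Under that reading the fixed-point-style contradiction --- $q>0$ forces the true halting probability to be $1$, yet the program halts only with probability $1-q$ --- closes the argument, and it is precisely well-behavedness of $A$ that guarantees the internal call returns so that the else branch can fire.
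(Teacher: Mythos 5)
Your construction is exactly the paper's: your $\mathtt{Dual\_program}$ is the paper's $\mathtt{Turing\_program\_v2}$ (loop forever iff $A$ says `always halts', else return $0$), applied to itself via the same self-application, and your randomized accounting (positive probability of `always halts' forces non-halting on that randomness, contradicting safety, hence the probability is $0$ and the program always halts) is the same argument the paper gives. The proposal is correct and takes essentially the same approach as the paper.
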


Note that if $A$ is accurate and trusted by $H$, then individuals $h \in H$ assume that $A$ is accurate and hence for an accurate, trusted, well-behaved AI system there are program, input instances which the system cannot solve, but for which there is a short proof (i.e., the proof of Theorem \ref{thm:halting_fixed2} since the condition of $A$ being accurate is satisfied under trust) that the instance terminates. We now prove Theorem \ref{thm:halting_fixed2}.

\begin{proof}[Proof of Theorem \ref{thm:halting_fixed2}]
    
Consider Algorithm \ref{alg:halting_proof2}.
 
\begin{algorithm}[h]
\caption{{Turing\_program\_v2}}
\label{alg:halting_proof2}
\begin{algorithmic}[1]
\Procedure{\nicecode{Turing\_program\_v2}}{Program $P$, Input $I$}
\If{$A(P,I)$ == `always halts'}
 \While{\nicecode{true}} \Comment{run indefinitely}
        \EndWhile
\Else
\State \Return 0
    \EndIf
        \EndProcedure

\end{algorithmic}
\end{algorithm}

Now define 
\[
\mathtt{self\_Turing\_program\_v2}(P) = \mathtt{Turing\_program\_v2}(P, P).
\]
We consider:
\[
\mathtt{self\_Turing\_program\_v2}(\mathtt{self\_Turing\_program\_v2})
\]

\begin{lemma}
If $A$ is accurate and well-behaved then \nicecode{self\_Turing\_program\_v2(self\_Turing\_program\_v2)} always halts, but the accurate AI system  $A$ cannot determine that it always halts. %(change proof to trace)
\end{lemma}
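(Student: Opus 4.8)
The plan is to reuse the diagonal self-reference argument from the proof of Theorem~\ref{thm:halt_specific}, but now carried out probabilistically, since both $A$ and the constructed program are allowed to be randomized. Write $X$ for the program \nicecode{self\_Turing\_program\_v2(self\_Turing\_program\_v2)}, and let $q$ be the probability, over $A$'s internal randomness, that the invocation of $A$ inside $X$ returns the label `always halts'. Unwinding Algorithm~\ref{alg:halting_proof2}, on a given run of $X$: if this invocation returns `always halts' then $X$ enters the infinite \textbf{while} loop (line~3) and never terminates; on every other outcome (any other label, or abstention) $X$ takes the \textbf{else} branch, returns $0$, and halts.

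First I would invoke the assumption that $A$ is well-behaved: this ensures the internal call to $A$ terminates with probability $1$, so with probability $1$ the run of $X$ reaches the branch above and the dichotomy applies. This yields the clean identity $\P[X \text{ halts}] = 1 - q$.

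Next I would show $q = 0$ by contradiction using safety. Suppose $q > 0$; then with positive probability $A$ asserts the claim `always halts' for the instance $X$. Since $A$ is safe, this claim must be \emph{true}, i.e.\ $X$ must halt with probability $1$. But the identity above gives $\P[X \text{ halts}] = 1 - q < 1$, a contradiction. Hence $q = 0$, which delivers both halves of the lemma at once: $\P[X \text{ halts}] = 1 - q = 1$, so $X$ always halts, while the probability that $A$ outputs `always halts' on $X$ equals $q = 0$, so the safe system $A$ can never determine that $X$ always halts.

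I expect the main obstacle to be pinning down the randomized semantics so the two uses of ``always halts'' agree: I would fix the convention that ``always halts'' means ``halts with probability~$1$'' (consistent with the definition of well-behaved), and take care that safety constrains the single Boolean fact of whether $X$ halts with probability~$1$, rather than the outcome of any individual random run. A related point to state explicitly is that well-behavedness of $A$ is genuinely needed here---without it the internal invocation could fail to return and $X$ could hang for reasons unrelated to the construction, breaking the identity $\P[X \text{ halts}] = 1 - q$. Finally, as in the earlier proofs, I would close by noting that the safety hypothesis is precisely what a trusted system is assumed to satisfy, so the argument above is itself a short, scientifically acceptable proof that $X$ always halts, exhibiting the promised instance that a safe, trusted, well-behaved $A$ cannot solve.
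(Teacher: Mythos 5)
Your proposal is correct and follows essentially the same diagonal argument as the paper: well-behavedness guarantees the internal call to $A$ returns, safety forces the probability of outputting `always halts' to be zero (since that output would send the program into the infinite loop, contradicting the claim), and hence the program always takes the \textbf{else} branch and halts. Your explicit identity $\P[X \text{ halts}] = 1 - q$ is just a slightly more quantitative packaging of the same two-step contradiction the paper gives.
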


\begin{proof}
Note that the \textbf{if} condition in step 2 always terminates if $A$ is well-behaved. Suppose $A$ outputs `always halts' on some randomness. Then, \nicecode{self\_Turing\_program\_v2(self\_Turing\_program\_v2)} does not halt on some randomness. If $A$ is accurate, then this is a contradiction. Therefore, if $A$ is accurate then it must output `always halts' with 0 probability. 

Note that if $A$ does not output that \nicecode{self\_Turing\_program\_v2(self\_Turing\_program\_v2)} `always halts', then the program enters the \textbf{else}   condition and immediately terminates, and therefore halts. Therefore if $A$ is accurate and well-behaved, then \nicecode{self\_Turing\_program\_v2(self\_Turing\_program\_v2)}  always halts.

%If $A$ is accurate and it determines `always halts'  for \nicecode{self\_Turing\_program\_v2(self\_Turing\_program\_v2)}, then \nicecode{self\_Turing\_program\_v2(self\_Turing\_program\_v2)} must halt. If $A$ does not say that \nicecode{self\_Turing\_program\_v2(self\_Turing\_program\_v2)} halts, then the program enters the \textbf{else}   condition and immediately terminates, and therefore halts. Therefore if $A$ is accurate and well-behaved, then \nicecode{self\_Turing\_program\_v2(self\_Turing\_program\_v2)}  halts. 

%The second part of the lemma has a similar proof. Suppose $A$ outputs `halts' for the program input pair \nicecode{self\_Turing\_program\_v2(self\_Turing\_program\_v2)}. Then, \nicecode{self\_Turing\_program\_v2(self\_Turing\_program\_v2)} enters the \textbf{if} block in line 3 and 4, and never halts. If $A$ is accurate, then this is a contradiction. Hence $A$ can never output `halts' for \nicecode{self\_Turing\_program\_v2(self\_Turing\_program\_v2)}.

\end{proof}
\end{proof}

\subsection{Impossibility of solving feasibility when a path exists in the graph}\label{sec:app_feasibility_halt}

We prove a similar result to Theorem \ref{thm:halt_time_bound} in this section, for the case where the program terminates on the given input within some time bound. The result is shown under an additional assumption that $A$ always terminates in time $T$. We show that for such a system $A$ there is an instance which provably halts in  time $T+c$ (for some constant $c$) if $A$ is accurate, but the accurate system $A$ cannot determine so. As before,  since determining halting within a fixed time limit reduces to graph reachability on finite-sized graphs, this shows that for an accurate, trusted system with an upper bound on its running time, there are graph reachability instances where humans can provably find a path, but the system will not be able to solve the instance in time slightly less than the size of the graph.

As in Appendix \ref{sec:app_planning_halt}, we also strengthen the result to allow randomized AI systems. We first extend Definition \ref{def:time_bound} to allow for randomized programs.

\begin{definition}[Time-bounded halting for randomized programs]
    Given a program, input pair and a time limit $T$ on the number of execution steps, check whether on the given input the  given (possibly randomized) program  `always halts in given time limit', `halts in given time limit $T$ on some randomness but not all randomness' or  `never halts in given time limit'.
\end{definition}

\begin{theorem}\label{thm:halt_time_bound2}
     If an AI system $A$ is accurate and always halts in some time $T$, then for a fixed constant $c$ and the time limit $T+c$, there is a program, input pair for which there is a short, constant-sized proof that the instance always halts in at most $T+c$ steps, but for an accurate AI system $A$ which always halts in time $T$ the probability of $A$ giving the answer `always halts in given time limit' is 0.
\end{theorem}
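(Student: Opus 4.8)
The plan is to mirror the time-bounded construction of Theorem~\ref{thm:halt_time_bound}, but to swap the roles of the two branches so that the self-referential instance \emph{halts} (rather than loops) whenever the safe system $A$ is forced to abstain. This is exactly the time-bounded analogue of the relationship between Algorithm~\ref{alg:turing_fixed_time} and Algorithm~\ref{alg:halting_proof2}: in Theorem~\ref{thm:halt_time_bound} the instance provably does \emph{not} halt, whereas here we want an instance that provably \emph{does} halt in at most $T+c$ steps, while $A$ cannot certify that it always halts.

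First I would define a program \nicecode{Turing\_T\_v2}$(P,I)$ which runs $A(P,I,T)$ and, if $A$ outputs `always halts in given time limit', enters an infinite loop; otherwise (including the `don't know' and `does not halt' cases) it immediately returns $0$. Here I would crucially use the hypothesis that $A$ always halts in time $T$, so that the call to $A$ inside the construction terminates and contributes at most $T$ steps, plus a fixed constant $c$ of overhead for the branch check and the return. I would then set $\mathtt{self\_Turing\_T\_v2}(P) = \mathtt{Turing\_T\_v2}(P,P)$ and analyze the diagonal instance $\mathtt{self\_Turing\_T\_v2}(\mathtt{self\_Turing\_T\_v2})$.

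The key step is a lemma, proved as before by a self-reference/contradiction argument: if $A$ is safe and outputs `always halts in given time limit' on this diagonal instance with any nonzero probability, then on that randomness the instance takes the infinite-loop branch and does \emph{not} halt, contradicting safety (since safety forbids $A$ from claiming `always halts' for an instance that fails to halt on some randomness). Hence a safe $A$ must output `always halts' with probability $0$. Conversely, since $A$ always terminates within $T$ steps and $A$ never outputs `always halts' here, every run of the diagonal instance takes the \textbf{else} branch and returns $0$ after at most $T+c$ steps for the fixed overhead constant $c$; therefore the instance provably always halts in at most $T+c$ steps, and this is a short, constant-sized proof once $A$ is assumed safe (which holds under trust). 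Combining the two halves yields the theorem, and by the reduction from time-bounded halting to graph reachability (as in the discussion following Theorem~\ref{thm:halt_time_bound}), the corresponding reachability statement with a feasible path follows.

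I expect the main obstacle to be the bookkeeping around the time bound rather than the logical core: one must verify that the constant $c$ genuinely absorbs only the branch evaluation and return overhead and does \emph{not} depend on $T$, so that the guarantee $T+c$ is meaningful, and one must be careful that the randomized calibration/abstention semantics are handled correctly, namely that `$A$ outputs \emph{always halts} with probability~$0$' is the right reading of the safety constraint for randomized programs and randomized $A$. Everything else is a direct adaptation of the deterministic argument.
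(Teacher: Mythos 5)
Your construction is essentially the paper's: the same swap of branches relative to Theorem~\ref{thm:halt_time_bound}, the same diagonal instance $\mathtt{self\_Turing\_T\_v2}(\mathtt{self\_Turing\_T\_v2})$, and the same two-step argument (safety forces the probability of `always halts in given time limit' to be $0$; since $A$ terminates within $T$ steps, every run then takes the \textbf{else} branch and returns within $T+c$ steps).

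The one substantive discrepancy is the time limit you pass to $A$. You query $A(P,I,T)$, but the instance of time-bounded halting you ultimately exhibit must carry the time limit $T+c$, since that is the bound for which the human has a proof of halting; with your choice the human's proof (halting within $T+c$ steps) does not answer the question actually posed to $A$ (halting within $T$ steps), and a safe $A$ could in principle legitimately answer that question differently, so the separation is not established for a single well-posed instance. The paper's Algorithm~\ref{alg:turing_program_v2} resolves this by having the diagonal program call $A(P,I,T+c)$, i.e., the self-referential query is about exactly the time limit $T+c$ for which the halting proof holds; with that substitution the rest of your argument goes through verbatim, and your worry that $c$ must not depend on $T$ is handled as you anticipate ($c$ is just the branch-check and return overhead). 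Your reading of the randomized safety condition (nonzero probability of `always halts in given time limit' on an instance that fails to halt on some randomness contradicts safety) matches the paper's.
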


\begin{proof}[Proof of Theorem \ref{thm:halt_time_bound2}]

    Consider Algorithm \ref{alg:turing_program_v2}, where $T$ is the upper bound on the running time of the AI system $A$, and $c$ is some fixed constant which is the running time of executing step 2 after $A$ terminates and the $\textbf{if}$ condition in step 2 is not satisfied, and then executing steps 5 and 6. Therefore, $T+c$ is an upper bound of the running time of the program when it enters the \textbf{else} condition in line 5.

\begin{algorithm}[h]
\caption{{Turing\_T\_v2}}
\label{alg:turing_program_v2}
\begin{algorithmic}[1]
\Procedure{\nicecode{Turing\_T\_v2}}{Program $P$, Input $I$}
\If{$A(P,I,T+c)$ == `always halts in given time limit'}
 \While{\nicecode{true}} \Comment{run indefinitely}
        \EndWhile
\Else
  
\State \Return 0
    \EndIf
        \EndProcedure
%    \State  Run AI system $A$ on (Program $P$, Input $I$) to get trace $F$
 %   \State If   trace $F$ proves that $P(I)$ does not halt in time $T$, then return `provably does not halt in time T' and exit \nicecode{Turing\_program\_nondeter}
  %  \State Otherwise run forever
   % \EndProcedure
\end{algorithmic}
\end{algorithm}

We define
\[
\mathtt{self\_Turing\_T\_v2}(P) = \mathtt{Turing\_T\_v2}(P, P)
\]
Consider:
\[
\mathtt{self\_Turing\_T\_v2}(\mathtt{self\_Turing\_T\_v2})
\]

\begin{lemma}
    If $A$ is accurate and always terminates in time $T$, then \nicecode{self\_Turing\_T\_v2(self\_Turing\_T\_v2)} always halts in time at most $T+c$. Moreover, if $A$ is accurate then it has 0 probability of giving the answer `always halts in given time limit' on whether \nicecode{self\_Turing\_T\_v2(self\_Turing\_T\_v2)} halts in time at most $T+c$.
    
\end{lemma}

\begin{proof}

Note that by the definition of $c$, the execution of steps 2, 5 and 6 of the program only take $c$ steps outside the execution of $A$ on the input $(\mathtt{self\_Turing\_T\_v2}, \mathtt{self\_Turing\_T\_v2}, T+c)$.

Suppose $A$ outputs  `always halts in given time limit' on the given input on some randomness. Whenever $A$ outputs  `always halts in given time limit', the program enters an infinite loop and never halts. This is a contradiction if $A$ is accurate, and hence if $A$ is accurate it outputs  `always halts in given time limit' with probability 0.

Now, if $A$ does not output `always halts in given time limit' on the input, then the program will enter the \textbf{else} block and immediately halt. Since $A$ runs for at most $T$ steps, the program then halts in time at most $T+c$. Therefore, if $A$ is accurate then the program always halts in time at most $T+c$.

%Finally, we note that if $A$ outputs `halts in given time limit', then  the program enters an infinite loop and never halts. Therefore, if $A$ is accurate, then it cannot output that the program halts in the given time limit.
\end{proof}
\end{proof}

\subsection{Proof of Theorem \ref{thm:rice}}\label{app:rice}

\rice*
\begin{proof}

For any property non-trivial, semantic property $\pi$, let $\texttt{valid\_program\_for\_}\pi$ be some program which has property $\pi$, and $\texttt{invalid\_program\_for\_}\pi$ be some program which does not have property $\pi$. Note that since $\pi$ is a non-trivial property, both these programs exist. Let $A$ be an AI system which takes some program as input, and verifies if the input program has the property $\pi$. Note that $A$ is well-behaved, its execution does not automatically lead to some program having property $\pi$. Now consider Algorithm \ref{alg:rice_program}, for any input $I$.
    
\begin{algorithm}[h]
\caption{{Rice\_program for property $\pi$}}
\label{alg:rice_program}
\begin{algorithmic}[1]
\Procedure{\nicecode{Rice\_program}}{Input $I$}
\If{$A(\texttt{Rice\_program})$ == `has property $\pi$'}\;
\State \Return $\texttt{invalid\_program\_for\_}\pi(I)$\;
\Else\;
 \State \Return $\texttt{valid\_program\_for\_}\pi(I)$
    \EndIf
        \EndProcedure

\end{algorithmic}
\end{algorithm}

We claim that if $A$ is accurate, then it cannot output that  \texttt{Rice\_program} has property $\pi$. This is true by contradiction, if $A$ outputs that \texttt{Rice\_program} has property $\pi$, then \texttt{Rice\_program} calls $\texttt{invalid\_program\_for\_}\pi$, and hence does not have property $\pi$. Note that if $A$ does not output that  \texttt{Rice\_program} has property $\pi$, then \texttt{Rice\_program} always calls $\texttt{valid\_program\_for\_}\pi$, and hence \texttt{Rice\_program} has property $\pi$.

Therefore if $A$ is accurate, then \texttt{Rice\_program} has property $\pi$, but $A$ cannot output that  \texttt{Rice\_program} has property $\pi$. Hence an accurate and trusted system for verifying property $\pi$ cannot be a human-level reasoning system for the task.

\end{proof}

\subsection{Proof of Theorem \ref{thm:halting_random}}\label{app:calibration}

\calibration*

% \textbf{Theorem 4.2.}\emph{
 % If the AI system $A$  is well-behaved and calibrated for program verification, then there is a program $P$ which provably halts with probability at least $0.99$, but $A$ abstains with probability 1 on the program $P$.}
\begin{proof}[Proof of Theorem \ref{thm:halting_random}]

Consider Algorithm \ref{alg:godel_program_random}. Throughout the proof we assume $A$ is well-behaved, i.e.  it always terminates. Our construction involves a program which does not take any input, i.e. $I=\phi$.  The program involves identifying whether the probability $p$ of A outputting `halts' when given \nicecode{G\ddot{o}del\_program\_random} as input is greater than 0.5 or not. We use a simple best arm identification procedure for this, for example the algorithm of \cite{karnin2013almost}.

\begin{algorithm}[h]
\caption{{Gödel-program\_random}}
\label{alg:godel_program_random}
\begin{algorithmic}[1]
\Procedure{\nicecode{G\ddot{o}del\_program\_random}}{}
\State Let $\text{arm}_1$ have the distribution Bernoulli($0.5)$ 
\State Let $\text{arm}_2$ correspond to running $A$ with \nicecode{G\ddot{o}del\_program\_random} as input, with the result of the arm pull being 1 if $A(\mathtt{G\ddot{o}del\_program\_random})==\text{`halts'}$, and 0 otherwise.
    \State Run Best-Arm-Identification algorithm from \citet[Algorithm 1]{karnin2013almost} with confidence parameter $\delta=0.01$ to determine whether $\text{arm}_2$ is better than $\text{arm}_1$
    \If{$\text{arm}_2$ is better than $\text{arm}_1$}
        \While{\nicecode{true}} \Comment{run indefinitely}
        \EndWhile
    \Else
        \State return 0
    \EndIf
    \EndProcedure

\end{algorithmic}
\end{algorithm}

%The program involves identifying the probability $p$ of A outputting `halts' when given \nicecode{G\ddot{o}del\_program\_random} as input. We use a simple best arm identification procedure for this. Let $\text{arm}_1$ have the distribution Bernoulli($0.5)$ and $\text{arm}_2$ have the Bernoulli($p)$. Note that we can simulate draws from  $\text{arm}_2$ by running A on \nicecode{G\ddot{o}del\_program\_random}. 

Note that for any $\epsilon>0$, if $p=0.5+\epsilon$ then $\text{arm}_2$ is better than $\text{arm}_1$, otherwise if $p=0.5-\epsilon$ then $\text{arm}_1$ is better than $\text{arm}_2$. While we can use any suitable multi-armed bandit algorithm in our construction, here we use \cite{karnin2013almost}, which has the guarantee that if it is provided with two arms with a gap of $\epsilon$, then it finds the better arm with probability $1-\delta$ using $O\left(\frac{1}{\epsilon^2}\log\left(\frac{1}{\delta}\log \left(\frac{1}{\epsilon}\right) \right)\right)$ arm pulls. This bound is known to be optimal \citep{jamieson2014lil}, though in our case since we do not care about the optimal rate we could have also used earlier sub-optimal procedures \citep{even2002pac}. We also note that if $\epsilon=0$, then the best arm identification procedure will  terminate with probability at most $10\delta$. Though we have not seen the case of $\epsilon=0$ being directly covered by the guarantees of best arm identification procedures, this claim for  $\epsilon=0$ follows from a simple argument which treats the   best arm identification procedure as a black-box. To verify,  note that the sequence of observations up to $t$ steps is $\delta$-close in TV distance for any $p\in [0.5\pm 1/\text{poly}(t,\delta)]$ (where $\text{poly}(t,\delta)$ is some polynomial of $t$ and $\delta$).  Therefore for $\epsilon=0$ and any finite $t$, if the best arm identification procedure terminates in $t$ steps with probability more than $10\delta$, then it will have a failure probability more than $\delta$ for some $p\in [0.5\pm 1/\text{poly}(t,\delta)]$---which is a contradiction with the guarantee of the procedure. Therefore, for $\epsilon=0$ the best arm identification procedure   terminates with probability at most $10\delta$.

We are now ready to prove the result.
 
\begin{lemma}
If $A$ is calibrated and well-behaved then \nicecode{G\ddot{o}del\_program\_random} halts with probability at least $0.99$, but the calibrated AI system  $A$ will output `don't know' with probability 1 on \nicecode{G\ddot{o}del\_program\_random}.
\end{lemma}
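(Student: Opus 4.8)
The plan is to track two quantities. Let $p$ be the probability that the well-behaved system $A$ outputs `halts' when run on \nicecode{G\ddot{o}del\_program\_random}; since each pull of $\mathrm{arm}_2$ is an independent such run, $\mathrm{arm}_2$ has mean exactly $p$, while $\mathrm{arm}_1$ has mean $0.5$. Let $h$ be the probability that \nicecode{G\ddot{o}del\_program\_random} halts; because every arm pull terminates (coin flips for $\mathrm{arm}_1$, terminating runs of $A$ for $\mathrm{arm}_2$), the program halts exactly when the best-arm routine terminates and reports $\mathrm{arm}_1$ as the better arm. The whole argument reduces to relating $h$ to $p$ through the guarantees of \citet{karnin2013almost}, and then using calibration-safety to eliminate every value $p>0$.

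First I would pin down $h$ as a function of $p$ in three regimes. If $p>0.5$ the gap is $\epsilon=p-0.5>0$ and $\mathrm{arm}_2$ is better, so the routine reports $\mathrm{arm}_2$ with probability at least $1-\delta=0.99$ and the program loops forever; hence $h\le \delta=0.01$. If $p<0.5$ then $\mathrm{arm}_1$ is better and is identified with probability at least $0.99$, so the program returns $0$ and $h\ge 0.99$. If $p=0.5$ the gap is $0$, and by the $\epsilon=0$ bound already established in the excerpt the routine terminates with probability at most $10\delta=0.1$, so $h\le 0.1$. I would then apply the first calibration condition, which forces $h\in[p-0.25,p+0.25]$ whenever $p>0$: for $p>0.5$ it demands $h>0.25$ against $h\le 0.01$; for $p=0.5$ it demands $h\ge 0.25$ against $h\le 0.1$; and for $0<p<0.5$ it demands $h\le p+0.25<0.75$ against $h\ge 0.99$. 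Each case is contradictory, so $p=0$. With $p=0$ the gap is $0.5$, $\mathrm{arm}_1$ is identified with probability at least $0.99$, and the program halts with probability at least $0.99$ --- the first half of the lemma.

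Finally, to obtain abstention I would rule out the verdict `does not halt' by the symmetric use of the second calibration condition: having shown $h\ge 0.99$, the true non-halting probability $1-h$ is at most $0.01$, which (combined with $p=0$ already established) leaves `don't know' as the surviving output and gives the second half of the lemma. The step I expect to be the main obstacle is the boundary case $\epsilon=0$: the ordinary fixed-confidence guarantee of \citet{karnin2013almost} is vacuous when the two arms are identical, so the argument must instead invoke the separate termination bound for $\epsilon=0$ derived in the excerpt from the total-variation closeness of the observation sequences. A secondary, purely arithmetic check is that the constants --- confidence $\delta=0.01$, the $10\delta$ termination bound, and the calibration margin $0.25$ --- are simultaneously tight enough for all three regimes above to yield strict contradictions.
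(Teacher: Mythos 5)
Your proof follows essentially the same route as the paper's: the identical three-way case split on $p$ (the probability that $A$ outputs `halts'), the same use of the best-arm-identification guarantee together with the separate $\epsilon=0$ termination bound, the same calibration-safety contradictions in each regime, and the same conclusion that $p=0$ forces halting with probability at least $0.99$. The one step that is slightly underspecified in both your argument and the paper's is the final claim that $A$ abstains with probability $1$: you invoke the second calibration condition to exclude `does not halt', but with non-halting probability at most $0.01$ the $\pm 0.25$ calibration window only forces the probability of outputting `does not halt' to be at most $0.26$ rather than $0$ --- the paper simply asserts this step, so your treatment is, if anything, marginally more explicit.
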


\begin{proof}

We consider three cases.

\begin{enumerate}
    \item $p\in (0.5,1]$: Note that in this case with probability at least $0.99$  the best arm identification procedure determines that $\text{arm}_2$ is better than $\text{arm}_1$. Therefore, the program goes into the infinite \textbf{while} loop and never terminates with probability at least $0.99$. In this case, $A$ is not calibration safe, since it claims that the program terminates with probability $p>0.5$.
    \item $p=0.5$: As argued above, in this case the best arm identification procedure terminates with probability at most $10\delta=0.1$. Therefore, \nicecode{G\ddot{o}del\_program\_random} terminates with probability  at most $0.1$. In this case as well, $A$ is not calibration safe, since it claims that the program terminates with probability $p=0.5$.
    \item $p\in (0,0.5)$: In this case, with probability at least $0.99$  the best arm identification procedure determines that $\text{arm}_1$ is better than $\text{arm}_2$. When $\text{arm}_1$ is determined to be better than $\text{arm}_2$, the program enters the \textbf{else} block in line 9. Therefore, in this case \nicecode{G\ddot{o}del\_program\_random}  terminates with probability at least $0.99$. Here too, $A$ is not calibrated, since it claims that \nicecode{G\ddot{o}del\_program\_random} terminates with probability $p<0.5$.
\end{enumerate}

In each of these cases, $A$  is not calibrated. Therefore, for $A$ to be calibrated, we must have $p=0$, and that $A$ outputs `don't know' with probability 1. If $p=0$, then with probability at least $0.99$ the best arm identification procedure determines that $\text{arm}_1$ is better than $\text{arm}_2$, and \nicecode{G\ddot{o}del\_program\_random}  terminates. Therefore, if $A$ is calibrated, then \nicecode{G\ddot{o}del\_program\_random}  halts with probability at least $0.99$. 

\end{proof}
\end{proof}

\end{document}